\documentclass{article}
\usepackage{lgno_arxiv}

\usepackage{amsthm}
\usepackage{mathrsfs}
\usepackage{algorithm}     \usepackage{algorithmic}   \usepackage{bm}
\usepackage{booktabs}
\usepackage{enumitem}
\usepackage{float}         \usepackage{framed,multirow}
\usepackage{comment}
\usepackage[dvipsnames,table]{xcolor}
\usepackage{latexsym}
\usepackage{amsmath}       \usepackage{amssymb}       \usepackage{bm}            \usepackage{amsmath,bm}
\renewcommand{\mathbf}{\bm}
\usepackage{algorithm}     \usepackage{algorithmic}   \usepackage{caption}       \usepackage{subcaption}    \usepackage{float}         \usepackage{graphicx}      \usepackage{comment}
\usepackage{epsfig}
\usepackage[numbers]{natbib}
\usepackage{booktabs}
\usepackage{multirow}
\usepackage{makecell}
\usepackage{siunitx}
\usepackage{adjustbox}
\usepackage{placeins}
\usepackage{makecell}
\usepackage{tabularx}
\newtheorem{theorem}{Theorem}[section]

\theoremstyle{definition}

\usepackage{mathtools}
\graphicspath{graph}
\usepackage{color} 
\usepackage{lineno}
\usepackage{multirow}
\usepackage{algorithm}
\usepackage{algorithmic}
\usepackage{amsmath}
\usepackage{psfrag}
\usepackage{graphicx}
\usepackage{mathrsfs}
\usepackage{booktabs}
\usepackage{diagbox}
\usepackage{xurl}
\usepackage{hyperref}
\definecolor{CMAMELink}{RGB}{0,112,128}
\hypersetup{
    colorlinks=true,
    linkcolor=CMAMELink,
    citecolor=CMAMELink,
    urlcolor=CMAMELink
}

\usepackage{cleveref}
\newcommand{\figref}[1]{\hyperref[#1]{Fig.~\ref*{#1}}}

\begin{document}

\title{Local Gradient Neural Operator}

\author{Baiming Zhang \\
Zhejiang University \\
\texttt{baimingzhang@zju.edu.cn}
\And
Jinsong Tang \\
Nanjing University of Science and Technology \\
\texttt{tangjs@njust.edu.cn}
\And
Ying Xu \\
Hefei University of Technology \\
\texttt{yingxu@hfut.edu.cn}
\And
Lihua Chen \\
Zhejiang University \\
\texttt{mecclh@zju.edu.cn}
\And
Shiying Xiong \\
Zhejiang University \\
\texttt{shiying.xiong@zju.edu.cn}}
\date{}
\maketitle

\begin{abstract}

Field temporal prediction and source identification constitute canonical problems in dynamical systems. Conventional approaches to these problems depend on a thorough understanding of the governing partial differential equations (PDEs). Recently, deep learning, as represented by neural operators, has provided a data-driven paradigm for addressing such tasks.
However, most existing global neural operators for PDEs require large training datasets and many learnable parameters, with limited interpretability and generalization. We propose the local gradient neural operator (LGNO) as a lightweight and interpretable alternative for field temporal evolution prediction and source identification in typical mechanical problems. The method builds on priors from nonlinear gradient discretization and uses multilayer perceptron convolutional layers to learn translation-invariant local kernels that resemble discrete stencils. A zero consistent stencil factorization separates coefficient learning from field reconstruction, rendering the learned operators more transparent. For problems with symmetries, network folding shares equivalent components and reduces parameter counts. We evaluate the method on PDE benchmarks covering linear and nonlinear, static and dynamic, and low and high dimensional cases. Results show that LGNO maintains accuracy, parameter efficiency, and rollout stability across these tasks, and further exhibits wide applicability to mechanical problems including diffusion, flow, and quantum phenomena.
\end{abstract}

\paragraph{Keywords:}
Dynamics prediction; Source identification; Partial differential equations; Local gradient neural operator

\section{Introduction}
\label{sec:Introduction}

Partial differential equations (PDEs)~\cite{JIANG2026114525,HUZAIFAYASEEN2025105650,ZHANG2025431,RAJPUT2025101316,SCHABERGER2026100402,DIFONZO2026102945} characterize diverse physical phenomena such as diffusion~\cite{ZHANG2026110205,SHIRAKAWA2026115006,GUO2026115143,WANG2026111203,HAO2025110108}, solid deformation~\cite{LI2026114681,ZENG2026111879,QIN2025104378,TANG2024111853,KUNDU2022107102}, fluid motion~\cite{CHEN2026110992,XUE2026204593,yanruquaternionic,DONG2026114837,huang2025electrokinetic,gong2025dynamic}, and quantum dynamics~\cite{ACHARYA2026135092,VALES2026114992,PUZZUOLI2023112262}. Two primary tasks arise in scientific and engineering applications. The first is field temporal evolution prediction~\cite{men2025multiple,eshaghi2026multi,li2026dynamic}, which extrapolates subsequent field distributions given instantaneous field data. The second is source identification~\cite{paixao2026hybrid,TANG2026114497}, which reconstructs unknown external excitations from measured field observations. Conventional numerical methods, including finite difference~\cite{SANCHEZGALVIS2026106218,VARGAS2026108337,xu2026gql}, finite element~\cite{PARK2026108014,TANG2023106920,jiang2024dynamic,CHOI2024107177}, and finite volume schemes~\cite{ZUO2026115147,LI2026114817,flint2026eulerian}, require complete knowledge of PDE governing equations, coefficients, and discretization configurations, which are often inaccessible for black box engineering systems with unknown governing laws~\cite{hou2026efficient,xu2026high,WANG2025109783}.

Deep learning has emerged as a data driven alternative for solving PDE-based problems~\cite{zhang2018deep,liu2024multi,park2025unifying,tang2025quantum,sun2026novel,zhu2025continuous,zhang2025scientometric}. Among such methods, physics-informed neural networks (PINNs)~\cite{karniadakis2021physics,raissi2019physics,raissi2020hidden,xiong2025deep,zhao2025physics} incorporate PDE residual loss into training objectives and perform well in solving forward and inverse problems when the differential formula is fully known. PINNs require the PDE residual form, however, which limits their applicability when the PDE is unknown, and the pointwise fitting approach may present optimization challenges even when the PDE is known~\cite{lu2021deepxde,wang2022and,li2024physics}.

Neural operator learning provides a framework for learning mappings between function spaces directly from data, without requiring the explicit PDE form~\cite{wang2026aiPDEreview,wang2026separated,zou2025uncertainty,azizzadenesheli2024neural,lu2024bridging,kovachki2023neural}. Deep operator networks (DeepONets)~\cite{lu2021learning,sarkar2026learning,KIYANI2026113952,IVAGNES2026118900} utilize the branch-trunk architecture to approximate nonlinear PDE operators. Fourier neural operators (FNOs)~\cite{li2021fourier,liu2025ms,li2025d,yu2026inverse} and graph neural operators (GNOs)~\cite{li2020neural,sarkar2025spatio,vo2026attention,sarkar2026physics} further enrich this framework. These methods perform well when sufficient training data are available. In the limited-data regime, however, learning a full-domain mapping from few samples may lead to overfitting and make it difficult to recover the intrinsic local operator law of PDEs. For many PDE governed physical fields, the temporal change or source response at one spatial location depends primarily on the field values in a small neighborhood~\cite{peetre1959caracterisation,courant1928partiellen}. Global architectures also exhibit spectral bias toward smooth low-frequency components, which can affect their resolution of local gradient-sensitive structures~\cite{goswami2023physics,rabczuk2023machine}.

Motivated by the local discrete nature of differential operators, researchers have developed stencil-aware local neural architectures. Multilayer perceptron convolution (MLPConv)~\cite{lin2014network} replaces a linear convolutional filter with a small nonlinear micro-network. PDE-Net~\cite{long2017pde} uses convolutional filters to approximate differential stencils and combines the resulting local derivative features through learnable nonlinear mappings. Convolutional neural operators (CNOs)~\cite{NEURIPS2023_f3c1951b} further extend local convolutional representations through multi-scale convolution hierarchies. These methods show that local stencil structures can be effectively incorporated into neural operator learning. 
The local-solution-operator informed neural network (LOINN)~\cite{jiao2025one} model further adopts a one-pass forward modeling paradigm to infer physical field evolution from local neighborhood features, providing a lightweight solution for PDE solving under limited samples. Time-adaptive operator learning via neural Taylor expansion (TANTE)~\cite{wu2026tante} introduces a different form of locality by using neural Taylor expansion around the current state and adaptive time stepping to improve temporal rollout. A direct local window output, however, still does not explicitly emphasize the gradient-driven response of the underlying operator. 
Table~\ref{tab:method_comparison} summarizes the structural differences between representative methods and the proposed approach.

\begin{table}[htbp]
\centering
\caption{Structural comparison between the selected baselines and LGNO. Here, "operator form" distinguishes direct global mapping, local stencil mapping, and gradient-involved differential stencil reconstruction, while "model size" gives a qualitative indication of the typical parameter scale.}
\label{tab:method_comparison}
\renewcommand{\arraystretch}{1.15}
\setlength{\tabcolsep}{5pt}
\begin{tabular*}{\linewidth}{@{\extracolsep{\fill}}lllll}
\toprule
Method & Representation & Range & Operator form & Model size \\
\midrule
MLPConv~\cite{lin2014network} & Local MLP & Local & Local stencil map & Small \\
DeepONet~\cite{lu2021learning} & Branch-trunk & Global & Direct map & Large \\
LOINN~\cite{jiao2025one} & One-shot operator learning & Local & Direct map & Medium \\
\textbf{LGNO} & \textbf{Gradient-involved MLP} & \textbf{Local} & \textbf{Differential stencil} & \textbf{Small} \\
\bottomrule
\end{tabular*}
\end{table}

We propose a local gradient neural operator (LGNO) for black-box source and temporal prediction under limited training data. The method formulates both tasks as local mappings from neighborhood information to an operator response: for temporal prediction, the response is a time derivative or one-step update; for source prediction, the response is the unknown forcing or source term.

The LGNO design uses local gradient oriented reconstruction instead of direct pointwise output. A sliding window extracts the neighborhood pattern around each grid point. A lightweight micro-network generates local kernel coefficients from this neighborhood. The target response is reconstructed through a pseudo-linear combination of these coefficients. This structure resembles a learnable nonlinear stencil. It offers sensitivity to local gradients, gives the learned coefficients a discretization interpretation, and reduces zero point drift.

LGNO also exploits symmetry induced sample augmentation when the physical setting allows. In many PDE datasets, different field components, or coordinate directions may share the same operator law. Rather than feeding all symmetric components in parallel as independent channels, LGNO folds symmetry related samples into serial local training instances. This reduces the number of input channels and trainable parameters while increasing the effective number of local samples. This data expansion benefits the single sample or few sample regime.

The method is evaluated on source and temporal prediction benchmarks, including diffusion, Burgers type dynamics, Navier-Stokes flow, Gross-Pitaevskii dynamics, and Schr\"odinger-type evolution. Across these examples, LGNO operates as a black-box model without requiring the explicit PDE form or known physical coefficients.

The remainder of this paper is organized as follows. Section~\ref{sec:proposed_LGNO_framework} presents the proposed framework in detail. Section~\ref{sec:results} describes numerical experiments and results. Section~\ref{sec:conclusion} concludes with a summary and future directions. The code associated with this work is publicly available at \url{https://github.com/baiming-zhang/LGNO}.

\section{The proposed LGNO framework}
\label{sec:proposed_LGNO_framework}

In this section, we first present the theoretical background and overall architecture of LGNO, then discuss several of its advantageous properties, with the detailed mathematical proofs relegated to the appendix.

\subsection{Overview of the LGNO framework}
\label{sec:framework_overview}

Consider a $d$-dimensional bounded Lipschitz spatial domain $\Omega \subset \mathbb{R}^d$ over the time interval $[0,T]$. The general continuous PDE formulation can be written as
\begin{equation}
\label{eq:PDE formulation}
\partial_t \bm{u}(\bm{x},t) + \mathcal{L}\big(\bm{x},\,\bm{u},\,\nabla \bm{u},\,\nabla^2 \bm{u},\,\dots,\,\nabla^k \bm{u}\big) = \bm{f}(\bm{x},t),
\quad (\bm{x},t)\in \Omega \times [0,T]. 
\end{equation}
where $\bm{x} = (x_1,x_2,\dots,x_d)$ stands for the spatial coordinate vector, and $t\in [0,T]$ represents time with $0$ and $T$. $\Omega \times [0,T]$ defines the complete spatio-temporal solution domain for all physical quantities.
 $\bm{u}=\bm{u}(\bm{x},t): \Omega\times[0,T]\to\mathbb{R}^{c_u}$ is the primary physical field. $c_u$ denotes the number of independent field components, which is decoupled from spatial dimension $d$.  $\partial_t \bm{u}(\bm{x},t)$ is the first order time derivative of the field $\bm{u}$.
 $\mathcal{L}\big(\bm{x},\,\bm{u},\,\nabla \bm{u},\,\nabla^2 \bm{u},\,\dots,\,\nabla^k \bm{u}\big)$ is a generic $k$-th order spatial differential operator that encapsulates diffusion, advection, dispersion and nonlinear coupling effects. 
 $\bm{f}=\bm{f}(\bm{x},t)$ is the spatio-temporally distributed external excitation vector or potential field.

   The PDE residual representation $\mathcal{R}[\cdot;\cdot]$ can be further derived by rearranging all terms to one side of the equality, the original PDE is equivalent to a zero residual constraint:
\begin{equation}
\label{eq:PDE residual representation}
\mathcal{R}[\bm{u};\,\bm{f}](\bm{x},t) = \partial_t \bm{u}(\bm{x},t) + \mathcal{L}\big(\bm{x},\,\bm{u},\,\nabla \bm{u},\,\nabla^2 \bm{u},\,\dots,\,\nabla^k \bm{u}\big) - \bm{f}(\bm{x},t) = 0. 
\end{equation}
The semicolon in $\mathcal{R}[\bm{u};\,\bm{f}]$ distinguishes two independent input arguments: the primary field $\bm{u}$ and the source term $\bm{f}$.

Crucially, differential operators obey three basic properties: strict locality, compact stencil discretization and translation equivariance, which serve as the foundation of the local stencil structure used below.
Finite order differential operators are local in the sense that their evaluation at a point depends only on the field and finitely many of its derivatives in an arbitrarily small neighborhood of that point \cite{peetre1959caracterisation,peetre1960rectification}. Under standard finite difference discretization, this locality is represented by a finite neighboring stencil \cite{courant1928partiellen}. When the coefficients and the computational setting are spatially homogeneous, the resulting operator also commutes with translations, giving translation equivariance \cite{hormander1960translation}. In spatially heterogeneous settings, exact translation equivariance may be broken, but the operator response can still be locally characterized and approximated by a local neural network with varying stencil weights~\cite{jiao2025one}.

Based on the basic properties of PDE and the local stencil representation of discretized PDE (\ref{sec:discrete_kernel}), we propose LGNO to learn a state dependent local stencil rule and assemble the resulting pointwise responses over the computational domain. \figref{fig:flowchart} gives the workflow of the proposed LGNO framework considering the data-driven setting.

\begin{figure}[htb]
\centering
\includegraphics[width=\linewidth]{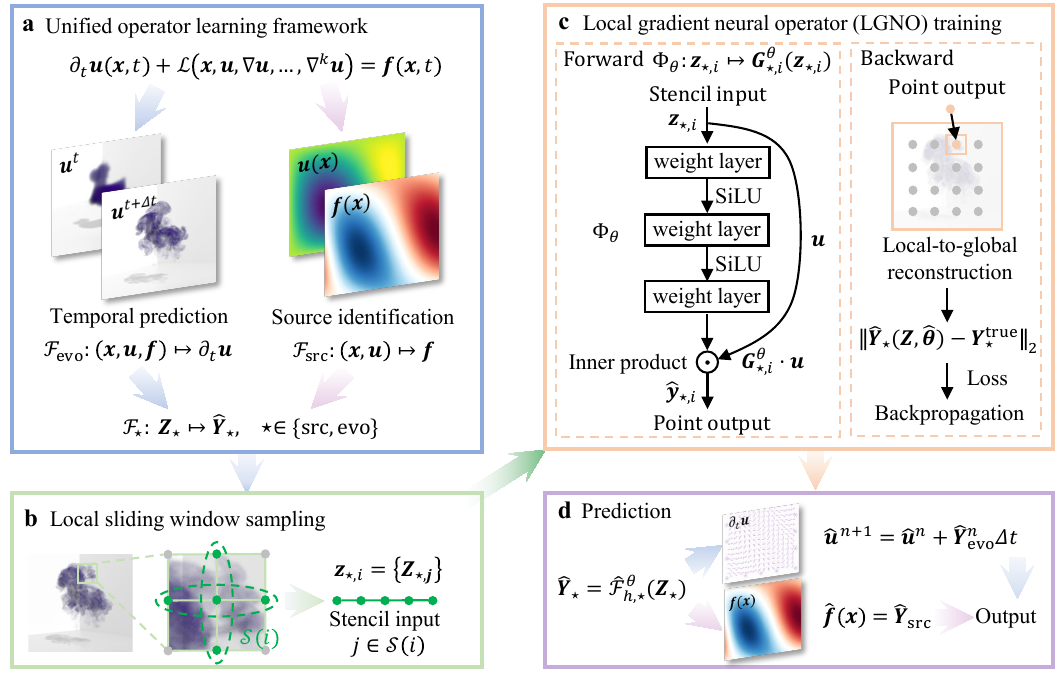}
\caption{Unified operator learning framework of the proposed LGNO. The model learns a local stencil-based operator from sliding-window samples and assembles the pointwise responses into a global prediction. Temporal evolution prediction and field-source prediction use the same local-to-global numerical structure, but the task-specific input and target fields have different physical meanings.}
\label{fig:flowchart}
\end{figure}

Two learning tasks are considered in this work, as shown in \figref{fig:flowchart}(a). They use the same local reconstruction form but assign different roles to the input and target fields. 

In field temporal evolution prediction, the primary field \(\bm{u}\) is given. The source related field \(\bm{f}\) is used as an external input. The target is the local time derivative \(\partial_t\bm{u}\). A general temporal evolution task $\mathcal{F}_{\mathrm{evo}}$ is written as
\begin{equation}
\label{eq:evolution_mapping_lgno}
\mathcal{F}_{\mathrm{evo}}:\bm{Z}_{\mathrm{evo}}=
(\bm{x}, \bm{u},\bm{f})
\mapsto
\widehat{\bm{Y}}_{\mathrm{evo}}=\partial_t\bm{u}.
\end{equation}

After predicting the time derivative, the field is advanced by a forward Euler update, following the standard time integration view used in neural ordinary differential equation formulations~\cite{chen2018neuralode,tang2025integrating}:
\begin{equation}
\label{eq:euler_rollout_lgno}
\widehat{\bm{u}}^{n+1}
=
\widehat{\bm{u}}^{n}
+
\widehat{\bm{Y}}_{\mathrm{evo}}^{n}\Delta t.
\end{equation}

In source identification, the observed primary field \(\bm{u}\) is the external input. The identification task $\mathcal{F}_{\mathrm{src}}$ of unknown source related field is written as:
\begin{equation}
\label{eq:source_mapping_lgno}
\mathcal{F}_{\mathrm{src}}:
\bm{Z}_{\mathrm{src}}=(\bm{x}, \bm{u})
\mapsto
\widehat{\bm{Y}}_{\mathrm{src}}=\bm{f}.
\end{equation}
Thus, \(\bm{f}\) appears as an input in field temporal evolution prediction when it is prescribed, but becomes the output target in source identification.

The two tasks of field temporal evolution prediction and source identification can be formulated under a unified notational system through task specific operator mappings:
\begin{equation}
\label{eq:unified_task_mapping}
\mathcal{F}_{\star}:\bm{Z}_{\star}\mapsto\widehat{\bm{Y}}_{\star},
\qquad
\star\in\{\mathrm{evo},\mathrm{src}\}.
\end{equation}
For each task, \(\bm{Z}_{\star}\) denotes the task specific input field and \(\widehat{\bm{Y}}_{\star}\) denotes the target field.

As indicated by this formulation, every single local window serves as one independent training sample, containing only the input field data within the stencil range and the target output value at the central grid point. Benefiting from the compact stencil discretization property of differential operators, the response of discrete operators at any grid node can be fully approximated using local window information rather than full domain field data.  The training pipeline is visualized in \figref{fig:flowchart}(b) and (c). LGNO extracts sliding windows from task inputs and predicts stencil-like weights, and local outputs are aggregated into global predictions.

Consider the corresponding task specific local input window which is defined as follows:
\begin{equation}\label{eq:window-def}
\mathbf{z}_{\star,i} = \big\{\bm{Z}_{\star,j}
:
j\in\mathcal{S}(i)\big\}.
\end{equation}
\noindent Based on Theorem~\ref{theorem:sparse}, the two tasks share the following stencil mapping form, giving the local output:
\begin{equation}
\label{eq:unified_stencil_form}
\widehat{\bm{y}}_{\star,i}
=
\widehat{\mathcal{F}}_{h,\star}^{\theta}(\mathbf{z}_{\star,i})
=
\sum_{j\in\mathcal{S}(i)}
\bm{G}_{\star,i,j}^{\theta}(\mathbf{z}_{\star,i}) \cdot \bm{u}_{i,j}
,
\qquad
\star\in\{\mathrm{evo},\mathrm{src}\}.
\end{equation}

Here \(\mathcal{S}(i)\) denotes the local stencil centered at grid point \(\bm{x}_i\).
\(\bm{G}_{\star,i,j}^{\theta}(\mathbf{z}_{\star,i})\) is the local matrix coefficient mapping the primary field at node \(j\) to the target response at node \(i\). Therefore, we train the shared MLPConv module to produce the state dependent stencil coefficients through this~\cite{lin2014network}:
\begin{equation}
\label{eq:coefficient_generator}
\Phi_{\boldsymbol{\theta}}
:
\mathbf{z}_{\star,i}
\mapsto
\left\{
\bm{G}_{\star,i,j}^{\theta}(\mathbf{z}_{\star,i})
\right\}_{j\in\mathcal{S}(i)}.
\end{equation}

Translation equivariance of the local architecture justifies the use of a shared local mapping across grid points. For any admissible translation vector \(\boldsymbol{\Delta}\), define the translation operator by
\begin{equation}
T_{\boldsymbol{\Delta}}\mathbf{z}_{\star}(\bm{x})
=
\mathbf{z}_{\star}(\bm{x}-\boldsymbol{\Delta}),\quad T_{\boldsymbol{\Delta}}\mathbf{y}_{\star}(\bm{x})
=
\mathbf{y}_{\star}(\bm{x}-\boldsymbol{\Delta}) .
\end{equation}
When the grid, boundary treatment, and calibrated operator are compatible with translations, the learned local operator is expected to satisfy: 
\begin{equation}
\label{eq:lgno_translation_equivariance}
\widehat{\mathcal{F}}_{h,\star}^{\theta}
\left[
T_{\boldsymbol{\Delta}}\mathbf{z}_{\star}(\bm{x})
\right]  
=
\widehat{\mathcal{F}}_{h,\star}^{\theta}
\left[
\mathbf{z}_{\star}(\bm{x}-\boldsymbol{\Delta})
\right]  
=
\widehat{\bm{y}}_{\star}(\bm{x}-\boldsymbol{\Delta})
=
T_{\boldsymbol{\Delta}}
\widehat{\bm{y}}_{\star}(\bm{x}) 
=
T_{\boldsymbol{\Delta}}
\widehat{\mathcal{F}}_{h,\star}^{\theta}
\left[
\mathbf{z}_{\star}(\bm{x})
\right]  .
\end{equation}
This property justifies using one shared local network to predict stencil coefficients at different grid points.

The global output is assembled from local predictions, in the same local-to-global sense used in standard numerical discretizations:
\begin{equation}
\label{eq:global_lgno_output}
\widehat{\mathbf{Y}}_{\star}
=\{\widehat{\mathbf{y}}_{\star,i}\}_{i\in\widehat{\Omega}}
=
\{
\widehat{\mathcal{F}}_{h,\star}^{\theta}(\mathbf{z}_{\star,i})
\}_{i\in\widehat{\Omega}},
\end{equation}
where \(\widehat{\Omega}\) is the set of grid nodes in the domain \(\Omega\) for which the prescribed stencil is well defined. Boundary nodes are treated according to the discretization used in each problem, e.g., via periodic wrapping.

This local sampling strategy increases the number of usable training samples without requiring many full field trajectories. Since the same differential law is applied at different spatial locations, local windows extracted from one or a few fields can provide repeated observations of the same underlying operator rule.

The trainable parameters \(\bm{\theta}\) are optimized by minimizing the discrepancy between the predicted responses \(\widehat{\mathbf{Y}}_{\star}(\mathbf{Z};\hat{\bm{\theta}})\) and the ground truth values \(\mathbf{Y}_{\star}^{\mathrm{true}}(\mathbf{x})\):
\begin{equation}
\label{eq:lgno_training_loss}
\bm{\theta}^{\ast}
=
\underset{\hat{\bm{\theta}}}{\arg\min}
\left\|
\widehat{\mathbf{Y}}_{\star}(\mathbf{Z};\hat{\bm{\theta}})
-
\mathbf{Y}_{\star}^{\mathrm{true}}(\mathbf{x})
\right\|_2.
\end{equation}

This parameter sharing enforces a common local rule and reduces the number of trainable parameters. After training, the coefficient generator together with the stencil contraction defines the learned discrete operator \(\widehat{\mathcal{F}}_{h,\star}^{\theta}\).

The main architectural distinction is the separation between coefficient prediction and target reconstruction. Instead of directly outputting the single point nonlinear target field as MLPConv does, LGNO first predicts local stencil coefficients and then constructs the response through an explicit contraction with neighboring primary field values, preserving the first-order gradient effects of the local numerical structure of PDE discretizations. This decomposition reduces the burden on the nonlinear network, keeps the final reconstruction tied to a discrete stencil form, and makes the learned operator closer to classical finite difference or finite volume representations.
This design embeds the locality of differential operators into the architecture rather than leaving it to be inferred from data alone, found useful in the tested small-sample regimes involving sharp gradients or strong nonlinearities.

\subsection{Zero-consistent reconstruction and error estimate}
\label{sec:zero_consistency_error}

LGNO uses a bias-free stencil contraction to enforce zero consistency on the calibrated local target. For many homogeneous physical operators, the zero-field state naturally gives a zero operator response, so no additional correction is needed. This is the case for the benchmark targets used in the numerical experiments below, where the learned responses are defined as calibrated differential or source responses and therefore vanish when the contracted primary-field channels are zero. In more general settings, prescribed coordinates, background potentials, constant source terms, or other conditioning channels may induce a finite zero-state response even when the contracted primary field vanishes. Such cases can be treated by subtracting the zero-state response before applying the bias-free reconstruction. The detailed derivation and proof are given in~\ref{sec:zero_consistency}. 

Let \(\mathbf{z}_{\star,i}^{0}\) denote the zero-state local window, where the physical field channels are set to zero while coordinate information and other prescribed conditioning channels, if present, are retained. For the raw local response, the zero-state offset is removed by

\begin{equation}
\label{eq:zero_state_calibration_main}
\mathcal{F}_{h,\star}
\left(
\mathbf{z}_{\star,i}
\right)
=
\mathcal{F}_{h,\star}^{\mathrm{raw}}
\left(
\mathbf{z}_{\star,i}
\right)
-
\mathcal{F}_{h,\star}^{\mathrm{raw}}
\left(
\mathbf{z}_{\star,i}^{0}
\right).
\end{equation}
The LGNO reconstruction is
\begin{equation}
\label{eq:zero_consistent_lgno_reconstruction}
\widehat{\bm{y}}_{\star,i}
=
\widehat{\mathcal{F}}_{h,\star}^{\theta}
\left(
\mathbf{z}_{\star,i}
\right)
=
\sum_{j\in\mathcal{S}(i)}
\bm{G}_{\star,i,j}^{\theta}
\left(
\mathbf{z}_{\star,i}
\right)
\bm{u}_{i,j}.
\end{equation}
Since the reconstruction contains no additive bias and the contracted primary field values vanish in the zero-state window, it follows directly that
\begin{equation}
\label{eq:zero_consistent_lgno_output}
\widehat{\mathcal{F}}_{h,\star}^{\theta}
\left(
\mathbf{z}_{\star,i}^{0}
\right)
=
\bm{0}.
\end{equation}

The resulting approximation error can be decomposed into a compact-stencil discretization error and a learned-coefficient error, following the standard separation between truncation error and approximation error in finite difference analysis~\cite{leveque2007finite}:
\begin{equation}
\label{eq:zero_consistent_error_decomposition}
\left\|
R_h
\mathcal{F}_{\star}
[
\bm{Z}_{\star}
]
-
\widehat{\mathcal{F}}_{h,\star}^{\theta}
[
\bm{Z}_{\star}
]
\right\|_{\ell_h^2}
\le
C_d h^p
+
C_{\star}
\varepsilon_{\theta}
\|\bm{u}\|_{\ell_h^2}.
\end{equation}
Thus, near the zero state, the bias-free contraction suppresses artificial output drift.

For temporal evolution prediction, a standard stability argument for time-dependent numerical schemes gives the corresponding rollout error estimate~\cite{birke2026error,qi2026efficient,paraschis2026linearly}:
\begin{equation}
\label{eq:zero_consistent_rollout_error}
\|\bm{e}^n\|_{\ell_h^2}
\le
e^{LT}
\|\bm{e}^0\|_{\ell_h^2}
+
\frac{e^{LT}-1}{L}
\zeta_{\theta}(h),
\qquad
T=n\Delta t.
\end{equation}
When \(L=0\), the second term is understood as \(T\zeta_{\theta}(h)\).

\subsection{Symmetry-induced network folding and orbit augmentation}
\label{sec:symmetry_folding_main}

When a calibrated task operator preserves a physical symmetry, symmetry related samples describe the same operator law under the corresponding representation transformation. This follows the usual role of symmetry in physical laws and is also consistent with geometric data augmentation in computer vision, where transformed images, including translated, scaled, or rotated images, are used as additional training samples~\cite{noether1918invariante,lecun1998gradient,chen2020group}. Let \(\Gamma\) be a compact symmetry group acting on the task-specific input and target fields through \(\rho_{\mathrm{in}}\) and \(\rho_{\mathrm{out}}\). If
\begin{equation}
\label{eq:symmetry_equivariance}
\mathcal{F}_{\star}
\left[
\rho_{\mathrm{in}}(\gamma)\bm{Z}_{\star}
\right]
=
\rho_{\mathrm{out}}(\gamma)
\mathcal{F}_{\star}
\left[
\bm{Z}_{\star}
\right],
\qquad
\forall \gamma\in\Gamma,
\end{equation}
then a valid calibrated pair \((\bm{Z}_{\star},\bm{Y}_{\star})\) gives the orbit-augmented pairs
\begin{equation}
\label{eq:symmetry_orbit_augmentation}
(\bm{Z}_{\star},\bm{Y}_{\star})
\mapsto
\left\{
\left(
\rho_{\mathrm{in}}(\gamma)\bm{Z}_{\star},
\rho_{\mathrm{out}}(\gamma)\bm{Y}_{\star}
\right)
:
\gamma\in\Gamma
\right\}.
\end{equation}

The same symmetry can also be used at the component level. For symmetry related components \(q\in\mathcal{Q}\), choose a canonical component \(q_0\) and a transformation \(\gamma_q\) mapping \(q\) to \(q_0\). With local window extraction \(\mathbf{z}_{\star,i}=\mathcal{E}_i[\bm{Z}_{\star}]\), the folded local sample is written as
\begin{equation}
\label{eq:symmetry_component_folding}
\widetilde{\mathbf{z}}_{\star,i,q}
=
P_{q_0}^{\mathrm{in}}
\mathcal{E}_i
\left[
\rho_{\mathrm{in}}(\gamma_q)\bm{Z}_{\star}
\right],
\qquad
\widehat{\widetilde{\mathbf{y}}}_{\star,i,q}
=
P_{q_0}^{\mathrm{out}}
\left(
\rho_{\mathrm{out}}(\gamma_q)\bm{Y}_{\star}
\right)_i .
\end{equation}
The corresponding canonical local reconstruction is
\begin{equation}
\label{eq:symmetry_folded_reconstruction}
\widehat{\widetilde{\mathbf{y}}}_{\star,i,q}
=
\sum_{j\in\mathcal{S}(i)}
\bm{G}_{\star,i,j}^{\theta}
\left(
\widetilde{\mathbf{z}}_{\star,i,q}
\right)
\widetilde{\bm{u}}_{i,j,q},
\qquad
q\in\mathcal{Q}.
\end{equation}
Thus, orbit augmentation provides additional valid input and target pairs, while componentwise folding reduces symmetry related components to a common local learning problem. This avoids separate parameterization of equivalent components, in line with the parameter sharing principle used in group equivariant neural networks~\cite{cohen2016group}. The reduction is valid only when the calibrated target respects the same symmetry. When zero point calibration or residualization is applied, the resulting target must still transform under \(\rho_{\mathrm{out}}\). We therefore do not use folding in problems with symmetry breaking sources, potentials, anisotropy, or direction dependent forcing, unless these factors are explicitly included in the input representation. The derivation is given in~\ref{sec:symmetry_folding}.

\section{Results}
\label{sec:results}

We next demonstrate and discuss the performance of LGNO through a rich set of numerical examples, covering physical problems ranging from one-dimensional to three-dimensional, and involving both source identification and dynamic prediction of physical fields.

\subsection{Statement of network architecture and data training}
The LGNO PDE settings, sampling configurations, network sizes, and corresponding
test relative \(L^2\) errors are reported in Table~\ref{tab:concise_results}. The network parameters were optimized using Adam, with an adaptive learning rate scheduler that reduces the step size once the training loss reaches a plateau. Gradient clipping and early stopping were used to improve training stability.
The complete results, including all tested models, hidden widths, parameter counts, training losses, test relative \(L^2\) errors, and error ratios, are provided in~\ref{Appendix:all_results}.

\begin{table}[htbp]
\centering
\caption{Best performance of LGNO across PDE benchmarks}
\begin{adjustbox}{max width=\linewidth}
\begin{tabular}{cccccc}
\hline
\textbf{Benchmark} & \textbf{Nonlinearity} & \textbf{Type} & \textbf{Parameters} & \textbf{Test rel. $L^2$} \\
\hline
1D Diffusion & & time & 23 & $1.2\%$ \\
2D Burgers & $\checkmark$ & space &  105 & $5.0\% $ \\
2D NS (two equations) & $\checkmark$ & time &  1188 & $0.8\%$ \\
2D SE (Gross-Pitaevskii) & $\checkmark$ & space &  370 & $11.6\%$ \\
3D SE (Perturbed harmonic) &  & time & 988 & $0.3\%$ \\
\hline
\end{tabular}
\end{adjustbox}
\label{tab:concise_results}
\end{table}

To ensure reproducibility, all experiments were conducted with fixed random seeds for data splitting, model initialization, and stochastic training procedures. The default random seed was set to 42 unless otherwise stated.
We test the proposed method on a set of PDE benchmarks covering linear and nonlinear, source-identification and temporal-evolution cases. 
The test data were strictly excluded from training in all cases. Experiments were run on a workstation equipped with an Intel i9-14900K 24-core CPU and an NVIDIA RTX 4080 SUPER GPU.

Moreover, we adopt a width controlled comparison, where the suffix ``-k'' denotes the hidden width of each model. This setting reduces the effect of parameter scaling and allows us to focus on the influence of architectural inductive bias.

\subsection{1D examples}

\paragraph{1D Diffusion}

We consider the one dimensional periodic diffusion equation
\begin{equation}
\frac{\partial u}{\partial t}
=
\kappa \frac{\partial^2 u}{\partial x^2},
\qquad
x\in[0,1),\quad t\in[0,1],
\end{equation}
where \(\kappa=0.1\). The equation is discretized on a uniform periodic grid with \(N_x=32\) spatial points and \(N_t=300\) temporal points. 
One trajectory is used for training and six independently generated trajectories are used for testing.

\begin{figure}[htb]
    \centering
    \includegraphics[width=\linewidth]{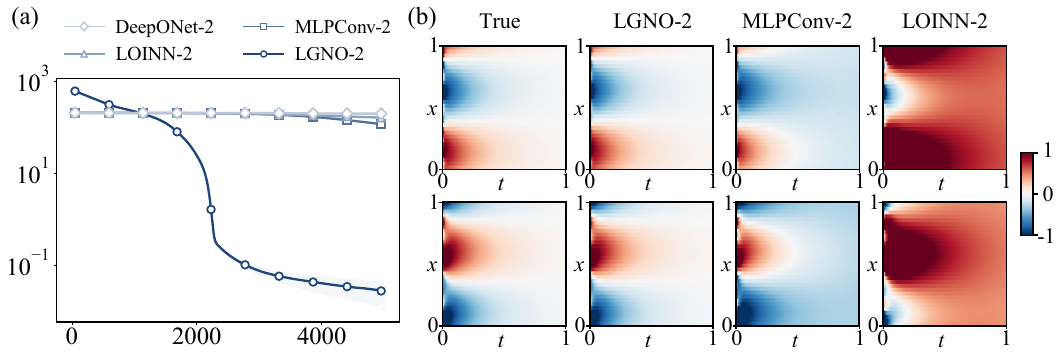}
    \caption{Training losses and rollout predictions for the one dimensional diffusion problem.
    (a) Training loss curves of LGNO-2, LOINN-2, MLPConv-2, and DeepONet-2.
    (b) Reference solutions and rollout predictions for two test cases. From left to right, the columns show the reference solution and the predictions of LGNO-2, MLPConv-2, and LOINN-2. The horizontal axis denotes time and the vertical axis denotes the spatial coordinate.}
    \label{fig:1ddiffusion}
\end{figure}

We compare LGNO-2 with LOINN-2, MLPConv-2, and DeepONet-2. All models are trained on the same single trajectory. As shown in \figref{fig:1ddiffusion}(a), the loss of LGNO-2 decreases rapidly after the initial stage and reaches a much lower value than those of the three baseline models. The diffusion operator has a compact three point stencil, which agrees well with the local structure used by LGNO.

\figref{fig:1ddiffusion}(b) shows the rollout results for two test cases. LGNO-2 reproduces the spatial patterns and their decay throughout the time interval. MLPConv-2 captures the general diffusion trend, but its amplitude and long time behavior differ from the reference solution. The prediction of LOINN-2 shows a larger deviation. DeepONet-2 is omitted from the rollout panels because its prediction error is much larger.

\begin{table}[htb]
    \centering
    \caption{Rollout errors on the six test trajectories.}
    \label{tab:1d_diffusion_errors}
    \begin{adjustbox}{max width=\linewidth}
    \begin{tabular}{lccccc}
        \toprule
        Model & Parameters & Min. error & Max. error
        & Overall error & Relative error \\
        \midrule
        LGNO-2     & 23 & 0.009 & 0.014  & 0.012  & \(1.0\times\) \\
        MLPConv-2  & 29 & 0.091 & 1.611  & 0.947  & \(78.3\times\) \\
        LOINN-2  & 35 & 0.662 & 4.878  & 3.068  & \(253.7\times\) \\
        DeepONet-2 & 95 & 5.284 & 42.573 & 23.492 & \(1942.6\times\) \\
        \bottomrule
    \end{tabular}
    \end{adjustbox}
\end{table} 

Table~\ref{tab:1d_diffusion_errors} reports the rollout errors on the six test trajectories. LGNO-2 gives the lowest error in every case. Its overall relative \(L_2\) error is \(1.21\times10^{-2}\), compared with \(9.47\times10^{-1}\) for MLPConv-2, the most accurate baseline. This corresponds to an error reduction of approximately \(98.7\%\). The results suggest that the local zero consistent structure allows LGNO to recover the discrete diffusion operator from a single training trajectory.

\subsection{2D examples}

\paragraph{2D Navier--Stokes}
\label{Para:2DNS}
We test LGNO on a 2D projected incompressible Navier--Stokes
benchmark in velocity form. After absorbing the pressure gradient into the
effective forcing, the equations used for training are
\begin{equation}
\begin{cases}
u_t + u\,\partial_x u + v\,\partial_y u
    = \nu \Delta u + \widetilde{g}_u, \\
v_t + u\,\partial_x v + v\,\partial_y v
    = \nu \Delta v + \widetilde{g}_v,
\end{cases}
\label{eq:ns_velocity}
\end{equation}
where $(u,v)$ are the two velocity components, $\nu=0.002$ is the viscosity,
and $(\widetilde{g}_u,\widetilde{g}_v)$ denotes the pressure absorbed forcing.
For the present data set, the external forcing is zero, so this term represents
the negative pressure gradient. The divergence-free structure is inherited
from the data-generation procedure.

The periodic domain $[-1,1)\times[-1,1)$ is discretized on a $32\times32$
grid. The trajectory contains 1000 frames with time step
$\Delta t=0.002$, covering $t\in[0,2.00)$. Only the first 20 frames,
$t\in[0,0.04)$, are used for training. Starting from the initial velocity
field, each model is then rolled out autoregressively over the complete
1000-frame trajectory.

\begin{figure}[htb]
    \centering
    \includegraphics[width=\linewidth]{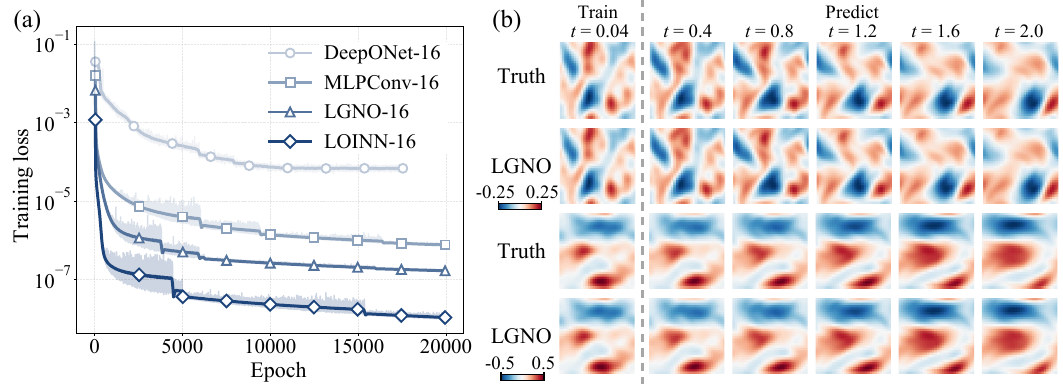}
    \caption{Learning and long time rollout of two-dimensional incompressible
    Navier--Stokes flow.
    (a) Training-loss curves of the compared models.
    (b) Reference and LGNO-16-predicted velocity fields. The first two rows
    show the horizontal velocity \(u\), and the last two rows show the vertical
    velocity \(v\). For each component, the upper row gives the reference
    solution and the lower row gives the LGNO-16 prediction. The model is
    trained using the first 20 frames, \(t\in[0,0.04)\), and rolled out to the
    final recorded time \(t=2.0\). Snapshots are shown
    at \(t=0.04,0.4,0.8,1.2,1.6,\) and \(2.0\).}
    \label{fig:2DNS}
\end{figure}

 \figref{fig:2DNS} summarizes the training behavior and the long time rollout result. As shown in \figref{fig:2DNS}(a), LGNO-16 reaches a low training error with a relatively small number of parameters. The rollout comparison in \figref{fig:2DNS}(b) further shows that, although the model is trained only about \(2\%\) of the full trajectory, it keeps the main vortex structures and remains stable until \(t=2.0\). Rollout results of the baseline models are given in~\ref{Appendix:2DNS}.

These results also show that a low training error, or even a low one step time-derivative error, is not sufficient for stable long time prediction. For example, LOINN-16 obtains the lowest training MSE, \(3.06\times10^{-8}\), and the lowest one-step time-derivative relative \(L^2\) error, \(2.50\times10^{-3}\), but its autoregressive rollout diverges. MLP-16 and FNO-16 also reach small training losses, while their rollout relative \(L^2\) errors increase to \(1.34\) and \(5.14\), respectively.

Table~\ref{tab:2DNS_model_comparison} summarizes the results for all tested configurations. The step error is computed from the saved one step time derivative test set of each implementation, and the rollout error is computed over both velocity components, all grid points, and all 1000 frames. LGNO-16 gives the best finite rollout relative \(L^2\) error, \(8.14\times10^{-3}\), with only \(1{,}188\) trainable parameters. Increasing the LGNO width from 16 to 32 does not further improve the rollout accuracy too much in this case, while LGNO-8 still gives a competitive result with only 548 parameters. Among the baseline models, MLPConv-16 gives the lowest rollout relative \(L^2\) error, \(1.74\times10^{-2}\).

\begin{table}[htb]
\centering
\caption{Accuracy and model size results for the 2D Navier--Stokes
benchmark. ``Train MSE'' is the minimum training loss. ``Step relative errors'' measure the predicted time derivative of each step, and ``rollout relative errors'' measure the complete
autoregressive velocity trajectory.}
\label{tab:2DNS_model_comparison}
\vspace{0.5em}
\begin{adjustbox}{max width=\linewidth}
\begin{tabular}{lrrrr}
\hline
Method
& Params
& Train MSE
& Step rel. $L^2$
& Rollout rel. $L^2$ \\
\hline
LGNO-16
& $1{,}188$
& $3.32\times10^{-7}$
& $1.38\times10^{-2}$
& $\mathbf{8.14\times10^{-3}}$ \\

MLPConv-16
& $610$
& $2.82\times10^{-6}$
& $2.76\times10^{-2}$
& $1.74\times10^{-2}$ \\

DeepONet-16
& $34{,}738$
& $4.49\times10^{-3}$
& $1.44$
& $1.01$ \\

LOINN-16
& $1{,}170$
& $3.06\times10^{-8}$
& $2.50\times10^{-3}$
& \textit{overflow} \\

\hline
\end{tabular}
\end{adjustbox}
\end{table}

\paragraph{2D Burgers}
\label{2D Burgers}

We further test LGNO on a scalar two-dimensional viscous Burgers-type
advection--diffusion benchmark. In this setting, the scalar field $u$ is
advected by the velocity $(u,u)$, and the nonlinear operator is written as
\begin{equation}
    f = (u,u)\cdot\nabla u - \nu \Delta u
      = u\,\partial_x u + u\,\partial_y u - \nu \Delta u,
\end{equation}
where the viscosity is set to $\nu=0.01$. This benchmark is used as a nonlinear
scalar operator-learning test case, rather than as the standard vector-valued
two-component Burgers system. The data are generated on a $512\times512$ grid
over the periodic domain $[-1,1)^2$.

To test data efficiency, we use only $5$ samples in total, with $1$ sample for
training and $4$ samples for testing. This extremely low-data setting makes
generalization difficult and exposes the tendency of over-parameterized models
to overfit. In the following comparison, we primarily consider the low-capacity
setting with hidden size $8$.

\begin{figure}[htb]
	\centering
\includegraphics[width=\linewidth]{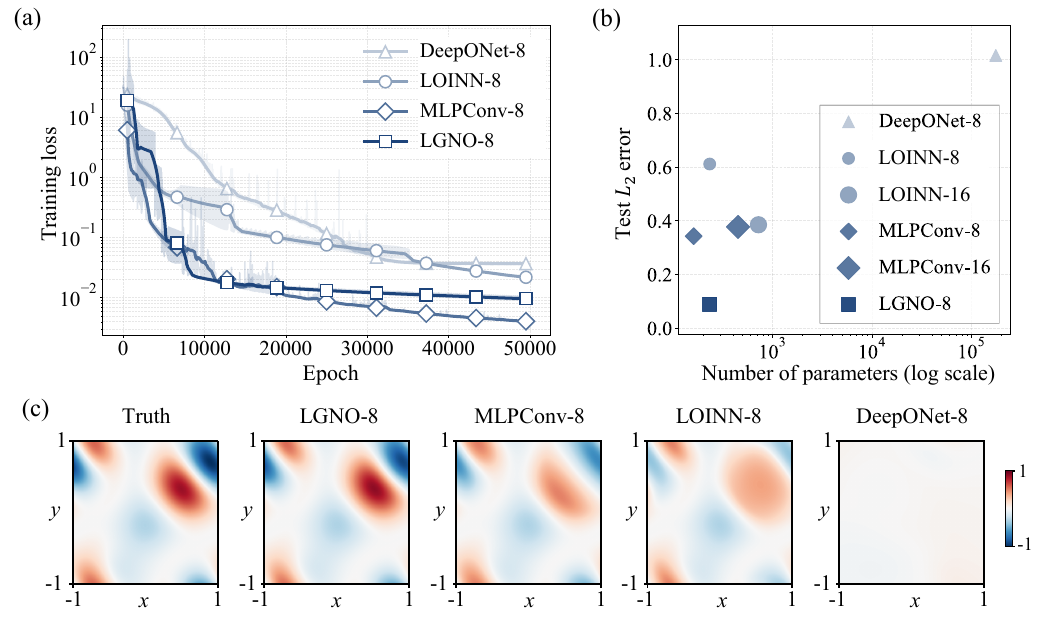}
	\caption{
Performance comparison on the 2D Burgers equation with hidden
size $8$.
(a) Training loss curves of the compared models.
(b) Pareto plot showing the relation between model size and test error.
(c) Qualitative comparison of operator predictions on a representative test
sample, including the reference solution and four models with hidden size $8$.
Additional results are shown in appendix Fig.~\ref{[Appendix]2DBurger}.}
\label{fig:burgers_results}
\end{figure}

 \figref{fig:burgers_results} compares the performance of different models on the 2D Burgers benchmark focusing on hidden size $8$, while Table~\ref{tab:concise_results} reports the best LGNO configuration at width 4. \figref{fig:burgers_results}(a) shows the training loss curves of the compared
models. MLPConv achieves a minimum training loss of
$2.65\times10^{-3}$, which is lower than the
$5.88\times10^{-3}$ obtained by LGNO. However, the Pareto plot in \figref{fig:burgers_results}(b) shows that the lower training loss does not
translate into better test accuracy. LGNO uses only $233$ trainable parameters
and reaches a test relative $L_2$ error of $9.29\times10^{-2}$. In comparison,
MLPConv, the best-performing baseline in this setting, uses $161$ parameters
but has a substantially larger test error of $3.54\times10^{-1}$. Thus, LGNO
reduces the test error by $73.77\%$, corresponding to a $3.81$-fold reduction
relative to the best baseline. This result indicates that LGNO achieves a
better balance between model size and test accuracy.

The qualitative comparison in \figref{fig:burgers_results}(c) further
supports this observation. On the representative test sample, the LGNO
prediction shows the closest visual agreement with the reference solution,
whereas the baseline models exhibit more noticeable deviations. These results
suggest that, although MLPConv can fit the single training sample more closely,
LGNO benefits from its local operator parameterization. Instead of directly
fitting an input-output map, LGNO learns adaptive local weights in a stencil
form, which helps reduce overfitting and improves test accuracy in the
extremely low data regime. Additional prediction results under other operating
conditions are provided in~\ref{Appendix:2D Burgers}.

\paragraph{2D Gross--Pitaevskii}
\label{paragraph:2DGP}

We next apply LGNO to the two dimensional Gross--Pitaevskii equation (GPE),
which is commonly used to describe Bose--Einstein condensates:
\begin{equation}
    f = \hat{H}\psi
    = -\frac{\hbar^2}{2m}\nabla^2\psi
    + V(\mathbf{r})\psi
    + g|\psi|^2\psi .
\end{equation}

The data are generated on a $64\times64$ uniform grid over a rectangular
periodic domain. Periodic boundary conditions are imposed on both the complex
wavefunction $\psi$ and the potential field $V$. The Hamiltonian action
$\hat{H}\psi$ is computed using a periodic finite difference discretization,
with the Laplacian evaluated by periodic stencil operations.

The Gross--Pitaevskii operator acts on a complex wavefunction \(\psi\). We first write
\begin{equation}
    \psi=\psi_{\mathrm R}+i\psi_{\mathrm I},
    \qquad
    \psi_{\mathrm R}=\mathrm{Re}(\psi),
    \qquad
    \psi_{\mathrm I}=\mathrm{Im}(\psi),
\end{equation}
and define the amplitude-squared field
\begin{equation}
    |\psi|^2=\psi_{\mathrm R}^2+\psi_{\mathrm I}^2 .
\end{equation}
The target is the Hamiltonian transformed field
\begin{equation}
    \hat H\psi
    =
    \mathrm{Re}(\hat H\psi)
    +
    i\,\mathrm{Im}(\hat H\psi).
\end{equation}

Since the same real valued Hamiltonian operator acts on both the real and
imaginary components, we do not treat \(\mathrm{Re}(\hat H\psi)\) and
\(\mathrm{Im}(\hat H\psi)\) as two unrelated output channels. Instead, the
coefficient-generation network is shared between the two component wise
branches. For the real component, the model uses the local stencil of
\(\psi_{\mathrm R}\), together with the local conditioning information provided
by \(V\) and \(|\psi|^2\), to predict
\begin{equation}
    f_{\mathrm R}
    =
    \mathrm{Re}(\hat H\psi).
\end{equation}
The imaginary component is treated in the same way. 

Equivalently, the component wise local construction can be written as
\begin{equation}
    [\mathrm{stencil}(\psi_{\mathrm R}),|\psi|^2,V]
    \longmapsto
    \mathrm{Re}(\hat H\psi),
    \qquad
    [\mathrm{stencil}(\psi_{\mathrm I}),|\psi|^2,V]
    \longmapsto
    \mathrm{Im}(\hat H\psi).
\end{equation}

Thus, the real and imaginary branches share the
same coefficient generation mechanism, consistent with the symmetry induced
folding principle discussed in Section~\ref{sec:symmetry_folding_main}.

This folding strategy improves sample efficiency in two ways. First, each local
prediction is reduced from a coupled complex output problem to a component-wise
scalar output problem, so the effective learning complexity is smaller than
that of the original full channel formulation. Second, every complex valued
training sample provides two structurally equivalent component wise learning
problems, one for the real branch and one for the imaginary branch. Therefore,
the model can exploit the shared Hamiltonian structure between the two
components without introducing separate coefficient generators. This design is
especially useful for extremely small training sets, where directly learning
the full complex operator may be underdetermined.

\begin{figure}[htb]
    \centering
    \includegraphics[width=\linewidth]{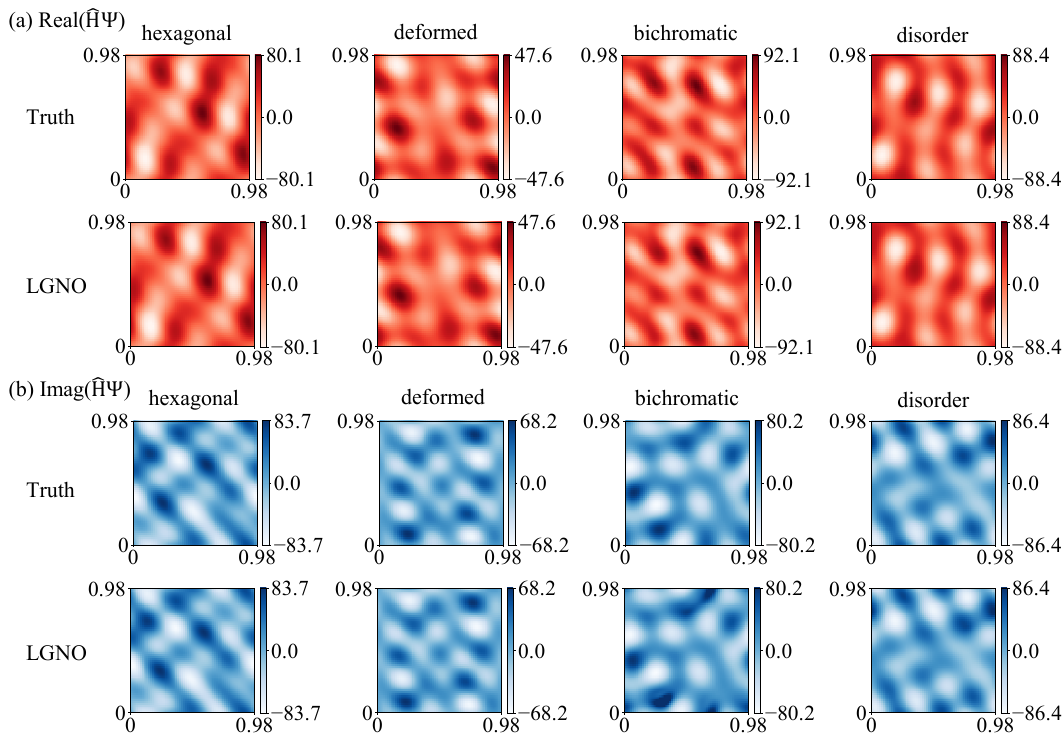}
    \caption{Reference and predicted Gross--Pitaevskii operator responses on unseen periodic potentials, including hexagonal, deformed, bichromatic, and disorder cases. (a) The real part of the source field. (b) The imaginary part of the source field.}
    \label{fig:2dgpe}
\end{figure}

The training set uses optical lattice potentials, whereas the test set includes
four unseen potential types: hexagonal, deformed, bichromatic, and disorder.
Details of the dataset construction and test potentials are given in~\ref{Appendix:2DGP}. This setting tests whether the model can
generalize from one potential family to other periodic potentials not seen
during training. The results presented in \figref{fig:2dgpe} show that LGNO is
able to reconstruct the field source distributions under several unseen
periodic potentials, although the bichromatic case remains more challenging
than the other test cases.

The optimized LGNO model uses a 5-point local stencil with hidden width \(12\).
Different from the previous explicit prior formulation, the \(V\psi\) term is
not treated as a known physical contribution. Instead, \(V\) is used only as a
local conditioning input to the coefficient generation MLP, and the full
normalized stencil weights are learned from data. The model contains only
\(370\) trainable parameters. The final overall test relative \(L^2\) error,
computed by merging all four unseen test cases and both output components, is
\(11.61\%\). The relative \(L^2\) errors for the real and imaginary components
of each test potential are reported in Table~\ref{tab:gpe_errors}.

\begin{table}[htb]
    \centering
    \caption{Relative \(L^2\) errors (\%) of LGNO on unseen 2D Gross--Pitaevskii test potentials. The average is computed as the arithmetic mean over the four test potentials.}
    \label{tab:gpe_errors}
    \begin{adjustbox}{max width=\linewidth}
    \begin{tabular}{c|cccc|c}
        \toprule
        Component & Hexagonal & Deformed & Bichromatic & Disorder & Average \\
        \midrule
        \(\mathrm{Re}(\hat{H}\psi)\) & 4.51 & 1.78 & 10.19 & 1.03 & 4.38 \\
        \midrule
        \(\mathrm{Im}(\hat{H}\psi)\) & 4.40 & 1.93 & 21.57 & 1.83 & 7.43 \\
        \bottomrule
    \end{tabular}
    \end{adjustbox}
\end{table}

Among the four unseen test potentials, the disorder case gives the smallest
real-part error, \(1.03\%\), while the deformed case gives the smallest
imaginary-part error, \(1.93\%\). The deformed and disorder potentials are both
predicted accurately, with errors below \(2\%\) for both components. The
hexagonal case also remains stable, with errors of \(4.51\%\) and \(4.40\%\) for
the real and imaginary parts, respectively. The bichromatic case is the most
difficult extrapolation case, with errors of \(10.19\%\) and \(21.57\%\) for the
real and imaginary parts, respectively.

For the hexagonal, deformed, and disorder potentials, the relative errors are
mostly below \(5\%\), showing that the learned local stencil operator can
generalize reasonably well from optical-lattice training potentials to several
unseen periodic potential families. The larger error on the bichromatic
potential indicates that this case introduces stronger out of distribution
features, especially for the imaginary component. Therefore, the results suggest
that the LGNO successfully captures the dominant local
Hamiltonian structure, but its extrapolation accuracy depends on how closely
the unseen potential family matches the local patterns observed during
training.

\subsection{3D examples}

\paragraph{3D time dependent Schr\"odinger equation}

We next test LGNO on the three dimensional time dependent Schr\"odinger
equation,
\begin{equation}
i\hbar \frac{\partial \psi}{\partial t}
=
\hat{H}\psi
=
-\frac{\hbar^2}{2m}\nabla^2 \psi
+
V(x,y,z)\,\psi,
\label{eq:Schrodinger3D}
\end{equation}
where $\psi(x,y,z,t)$ is the complex valued wavefunction and $\hat{H}$ denotes
the Hamiltonian operator. Here, we use dimensionless units with $\hbar=1$ and $m=1$,
so the kinetic term becomes $-\frac{1}{2}\nabla^2\psi$.

The data are generated on a $16\times16\times16$ Cartesian grid over
$[-2,2]^3$. Spatial derivatives are computed using second order central finite
differences, and the wavefunction is advanced in time with an explicit Euler
scheme. We generate one trajectory with $N_t=5000$ time steps and
$\Delta t=10^{-4}$. The first $20\%$ of the trajectory is used for training,
and the full time horizon is used to evaluate rollout behavior.

The initial condition is a localized Gaussian wave packet with a plane wave
phase in the $x$ direction,
\begin{equation}
\psi(x,y,z,0)
=
\exp\left(
-\frac{x^2+y^2+z^2}{2\sigma^2}
\right)
e^{i k_0 x},
\qquad
\sigma = 0.8,\quad k_0 = 2.0.
\label{eq:initial}
\end{equation}
This gives a wave packet with nonzero momentum along the $x$ direction.

The external potential is defined as an anisotropic harmonic trap with a smooth
sinusoidal perturbation,

\begin{figure}[!htbp]
	\centering
	\includegraphics[width=\linewidth]{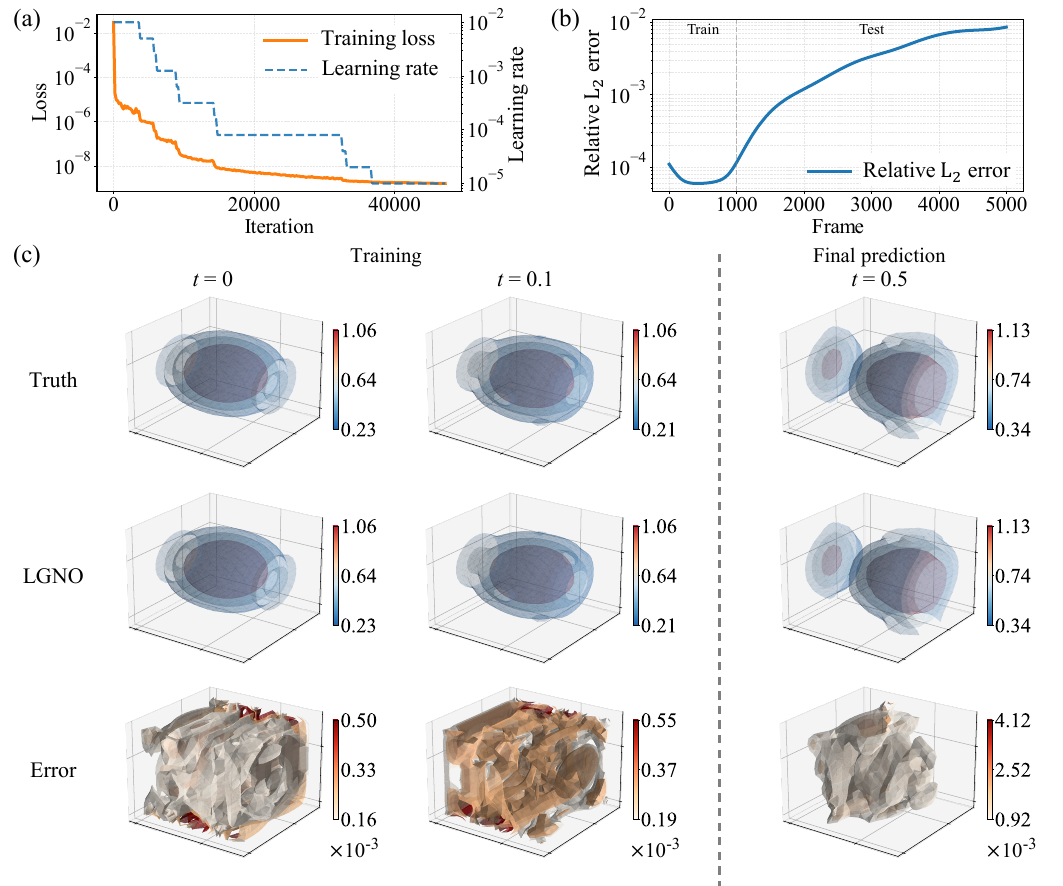}
	\caption{3D operator learning for the Schr\"odinger system.
    (a) Training curves. The solid line shows the training loss, and the dashed
line shows the learning rate schedule.
(b) Rollout error over time. The model is trained on early time data and then
evaluated on later frames.
(c) Isosurface comparison of the operator-response amplitude $|H\psi|$.
From top to bottom, the rows show the reference solution, the LGNO prediction,
and the pointwise relative error. The columns correspond to $t=0$, $t=0.1$,
and $t=0.5$. The first two snapshots are within the training interval, while
the last one is the final predicted frame.}
\label{fig:3DShro_3D}
\end{figure}

\begin{equation}
V(x,y,z)
=
\frac{1}{2}
\left(
\omega_x^2 x^2 + \omega_y^2 y^2 + \omega_z^2 z^2
\right)
+
\alpha \sin(2x)\cos(3y)\sin(z),
\label{eq:potential}
\end{equation}

where $(\omega_x,\omega_y,\omega_z)=(1.0,\,1.5,\,0.8)$ and $\alpha=0.3$.
This potential introduces both anisotropic confinement and spatial oscillations,
leading to changes in the wave packet shape and phase over time.

The local operator is represented by a three layer MLP with SiLU activations and hidden dimension $16$, applied pointwise on the grid. After training, the optimized LGNO model is adopted to predict the Hamiltonian operator response $\hat{H}\psi$ at different time instants. \figref{fig:3DShro_3D} summarizes the three dimensional learning results. As shown in \figref{fig:3DShro_3D}(a), the curves of training loss and the learning rate schedule demonstrated a stable and convergent optimization process. \figref{fig:3DShro_3D}(b) depicts the temporal variation of rollout error. The model is trained on early time data and evaluated on later frames, with the largest relative $L^2$ error of $8.58\times10^{-3}$ appearing at the final frame $t=0.5$, while the total average relative $L^2$ error is only 0.3\%. \figref{fig:3DShro_3D}(c) provides a visual check of the operator prediction at representative stages of the rollout. The predicted responses remain close to the reference solution with the reference field in both spatial morphology and amplitude distribution.

\begin{figure}[!htbp]
	\centering
	\includegraphics[width=\linewidth]{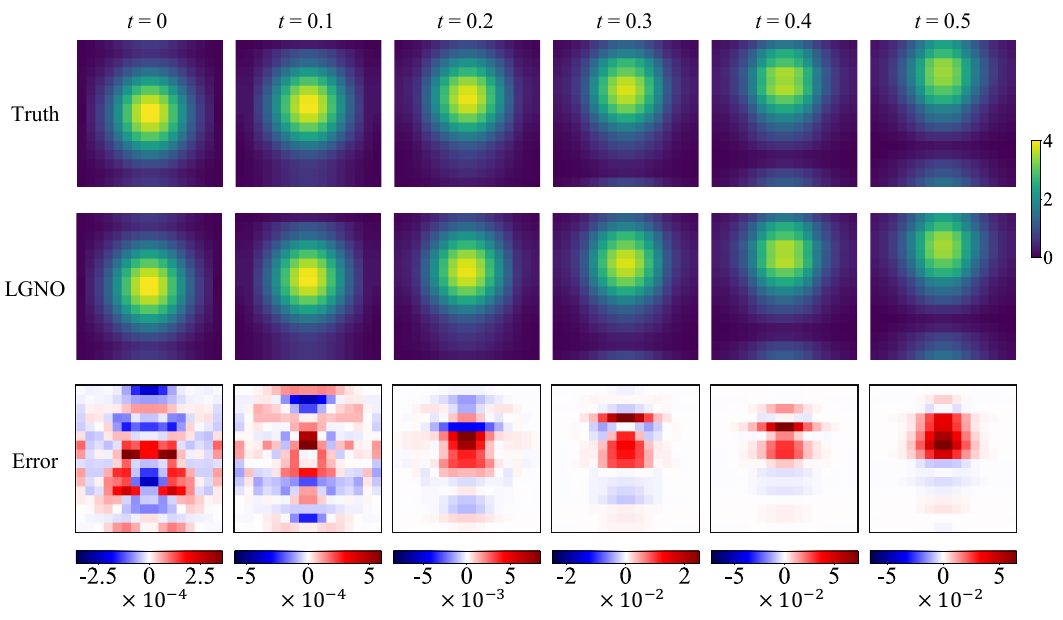}
    \caption{Hamiltonian operator prediction for the three-dimensional Schr\"odinger
equation with a symmetry-breaking potential. The central $xy$-slice at $z=0$
is shown. From top to bottom, the rows give the reference solution, the LGNO
prediction, and the absolute error. The reference and prediction use the same
color scale for the amplitude $|\hat{H}\psi|$, with values in $[0,4]$. The
error is plotted with a separate color scale from $2.5\times10^{-4}$ to
$5\times10^{-2}$.}
\label{fig:3DShro}
\end{figure}

\figref{fig:3DShro} further shows the central $xy$-slice at $z=0$ for the predicted field source. The reference and prediction share the same amplitude color scale, while the error is plotted separately. Slice-wise comparison confirms that the LGNO model reconstructs the primary three-dimensional operator response with high accuracy, achieving a maximum relative error of less than 5\%.

\subsection{Robustness}
\label{sec:robustness}

We further evaluate the robustness of LGNO on the two dimensional Burgers
benchmark by perturbing the input field with multiplicative noise. For a clean
input field \(u\), the noisy field is defined as
\begin{equation}
    u_{\mathrm{noisy}} = u(1+\alpha \eta),
\end{equation}
where \(\eta\) is generated from a uniform random field and then spatially
smoothed. The parameter \(\alpha\) controls the perturbation strength. In this
experiment we consider
\begin{equation}
    \alpha \in \{0.00,0.01,\ldots,0.10\}.
\end{equation}

For each noise level, LGNO-8 and MLPConv-8 are trained under the same setting:
hidden dimension \(8\), learning rate \(10^{-2}\), \(10000\) maximum epochs, and patience \(1000\). MLPConv-8 is used as the local baseline because it has the closest stencil based structure among the compared methods.

\begin{figure}[htb]
    \centering
    \includegraphics[width=0.96\linewidth]{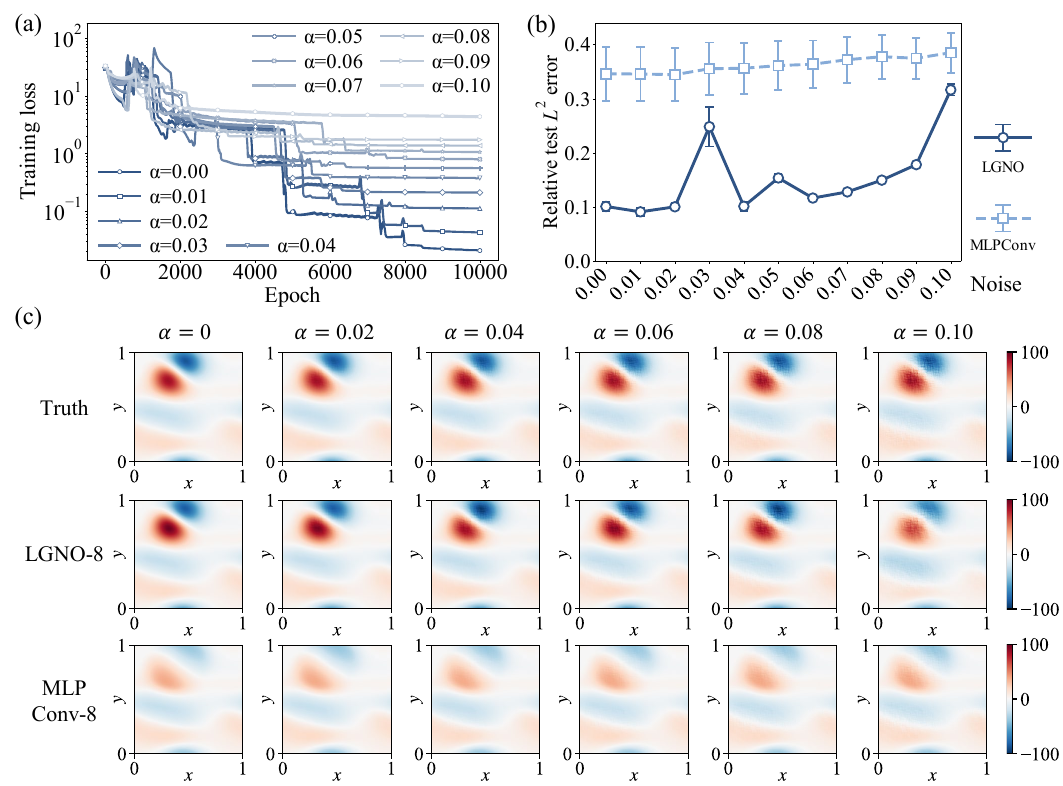}
    \caption{
    Robustness evaluation under multiplicative input noise.
    (a) LGNO training loss curves for noise levels
    \(\alpha=0.00,\ldots,0.10\).
    (b) Mean relative \(L^2\) test error of LGNO-8 and MLPConv-8 over five
    evaluation seeds \(447\)--\(451\). Error bars visualize \(0.3\sigma\),
    where \(\sigma\) is the standard deviation across the five seeds.
    (c) Representative qualitative comparison at normalized noise ratios
    \(0,0.2,0.4,0.6,0.8,1.0\), corresponding to
    \(\alpha=0.00,0.02,0.04,0.06,0.08,0.10\). Each group shows the reference
    solution, the LGNO-8 prediction, and the baseline prediction.
    }
    \label{fig:noise}
\end{figure}

 \figref{fig:noise}(a) shows that LGNO can be optimized consistently across
all tested noise levels. Although the loss trajectories become less smooth as
the perturbation increases, the model remains trainable throughout the whole
range \(\alpha\in[0,0.10]\). \figref{fig:noise}(b) gives the quantitative
comparison with MLPConv-8. The reported errors are averaged over five
independent evaluation seeds \(447\)--\(451\), and both models are tested on
exactly the same noisy level inputs. LGNO-8 achieves a lower mean relative \(L^2\)
error at every tested noise level. The advantage is large for weak and
moderate perturbations, and it remains positive even at the strongest tested
noise level \(\alpha=0.10\), where LGNO-8 obtains an average error of
\(3.166\times10^{-1}\) compared with \(3.853\times10^{-1}\) for MLPConv-8.

The qualitative examples in \figref{fig:noise}(c) further support this
observation. As the normalized noise ratio increases from \(0\) to \(1.0\), the
input field becomes progressively more perturbed. LGNO-8 still preserves the
dominant spatial structure of the reference solution and avoids the larger
local deviations observed in the baseline prediction. These results indicate
that the learned local geometric operator remains effective when the input is
corrupted by moderate multiplicative noise.

Overall, the robustness experiment shows that LGNO-8 is stable under the
tested perturbation range. Compared with the MLPConv-8 baseline, it produces
lower average relative \(L^2\) error for all noise levels
\(\alpha=0.00,\ldots,0.10\), with relative improvements ranging from
\(17.8\%\) at \(\alpha=0.10\) to \(73.6\%\) at \(\alpha=0.01\).

\section{Conclusion}
\label{sec:conclusion}

This work presented LGNO, a lightweight and interpretable local gradient neural operator for PDE source identification and temporal field prediction. By using gradient-aware stencil reconstruction, LGNO separates stencil coefficient learning from field reconstruction and embeds the locality of differential operators into the network design. Together with symmetry-induced folding and orbit augmentation, this structure improves sample efficiency and reduces redundant parameters under limited data. Numerical tests cover linear and nonlinear, static and dynamic, and low- and high-dimensional benchmarks against representative neural operator baselines.

Under the unified low-data comparison protocol, LGNO reduces the test error by about \(74.5\%\) on average. The reductions are especially clear in the 2D Burgers and 3D Schr\"odinger cases, where the errors decrease by \(85.9\%\) and \(84.2\%\), respectively. In the Navier--Stokes rollout test, LGNO reduces the error by \(53.2\%\) compared with the best stable baseline. These results are obtained with compact models, using only \(23\) parameters for 1D diffusion, \(105\) for 2D Burgers, \(370\) for 2D Gross--Pitaevskii, and fewer than \(1200\) for the Navier--Stokes and Schr\"odinger examples.

The core modules of LGNO provide distinct practical benefits. Symmetry folding reuses equivalent field components to increase effective samples while controlling model size. Zero-consistent reconstruction removes static offsets and avoids artificial zero-state drift, while the error bound separates discretization and fitting errors. The Burgers noise tests further show that LGNO remains stable under moderate input perturbations. Future work will extend LGNO toward weakly nonlocal and multi-scale systems by incorporating lightweight global-context perception, while carefully balancing computational cost, parameter efficiency, and interpretability.

\section*{Acknowledgement}
The authors acknowledge the support of the National Natural Science Foundation of China (Grants No. 52505102, 12504544, 12525201, and 12432010). Baiming Zhang is supported by the Future Academic Star Program for Undergraduates at Zhejiang University.

\section*{Data availability}
All the datasets in the study are generated directly from the code.

\section*{Declaration of competing interest}
The authors declare that they have no known competing financial interests or personal relationships that could have appeared to influence the work reported in this paper.

\appendix
\renewcommand{\thetheorem}{\Alph{section}.\arabic{theorem}}

\section{Local stencil representation of discretized PDE operators}
\label{sec:discrete_kernel}

A finite-order local differential operator is evaluated from local field information. After compact-stencil discretization, this locality becomes a finite interaction pattern among neighboring grid values. For nonlinear or quasilinear operators, the local coupling coefficients may depend on the local state.

\begin{theorem}
\label{theorem:sparse}
Let \(\mathcal{L}\) be a finite-order local differential operator, and let \(\mathcal{L}_h\) be a compact-stencil discretization of order \(m\) on a uniform grid
\begin{equation}
\label{eq:uniform_grid_definition_pde}
\mathcal{G}_h
=
\{\bm{x}_i\}_{i=1}^{N}.
\end{equation}
For each interior node \(\bm{x}_i\), assume that the stencil \(\mathcal{S}(i)\) is finite and uniformly bounded independently of \(h\). Define the local input window by
\begin{equation}
\label{eq:local_window_pde}
\mathbf{z}_{\mathcal{L},i}
=
\left\{
(\bm{x}_j,\bm{u}_j)
:
j\in\mathcal{S}(i)
\right\}.
\end{equation}
For \(j\in\mathcal{S}(i)\), write \(\bm{u}_{i,j}:=\bm{u}_j\). Then the discrete response at \(\bm{x}_i\) depends only on \(\mathbf{z}_{\mathcal{L},i}\):
\begin{equation}
\label{eq:local_rule_pde}
(\mathcal{L}_h[\bm{u}])_i
=
\Psi_h
\left(
\mathbf{z}_{\mathcal{L},i}
\right),
\end{equation}
for some local discrete rule \(\Psi_h\). For smooth admissible fields, the consistency relation is
\begin{equation}
\label{eq:local_pde_discretization}
(R_h\mathcal{L}[\bm{u}])_i
=
(\mathcal{L}_h[\bm{u}])_i
+
\mathcal{O}(h^m),
\end{equation}
where \(R_h\) denotes restriction to the grid nodes. If the discrete operator is written in a state-dependent affine stencil form with respect to the contracted primary-field values, then
\begin{equation}
\label{eq:local_kernel_representation_pde}
(\mathcal{L}_h[\bm{u}])_i
=
\sum_{j\in\mathcal{S}(i)}
K_{i,j}
\left(
\mathbf{z}_{\mathcal{L},i}
\right)
\bm{u}_{i,j}
+
\bm{b}_i,
\end{equation}
where \(K_{i,j}(\mathbf{z}_{\mathcal{L},i})\in\mathbb{R}^{c_{\mathcal{L}}\times c_u}\) is the local coupling block from stencil node \(j\) to the response at node \(i\). Here, \(c_u\) is the number of primary-field components and \(c_{\mathcal{L}}\) is the number of response components of \(\mathcal{L}_h[\bm{u}]\). The additive term \(\bm{b}_i\in\mathbb{R}^{c_{\mathcal{L}}}\) is independent of the contracted primary-field values, although it may depend on the grid node or fixed prescribed quantities. If the operator, grid, and discretization are translation equivariant away from boundaries, the same interior local rule may be reused across grid points.
\end{theorem}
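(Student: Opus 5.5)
The plan is to treat the statement as four separate claims and verify them in order: locality of the discrete rule, consistency, the affine stencil representation, and reuse of the rule under translation equivariance. Each claim rests mostly on the definitions, so the work is in fixing what ``compact-stencil discretization of order \(m\)'' means. I will use the following working definition. For every interior node \(\bm{x}_i\) there is a finite index set \(\mathcal{S}(i)\), with \(|\mathcal{S}(i)|\le M\) and \(M\) independent of \(h\), and a map \(\Psi_h\) such that \((\mathcal{L}_h[\bm{u}])_i\) is computed from the finitely many values \(\{(\bm{x}_j,\bm{u}_j)\}_{j\in\mathcal{S}(i)}\) by finite-difference quotients, products and coefficient evaluations.

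With this definition, (\ref{eq:local_rule_pde}) follows by construction. If two grid functions agree on \(\mathcal{S}(i)\), every difference quotient entering the formula at node \(i\) coincides, so their responses at \(i\) coincide. Hence the response factors through the window \(\mathbf{z}_{\mathcal{L},i}\).

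For the consistency relation (\ref{eq:local_pde_discretization}), I will Taylor-expand \(\bm{u}(\bm{x}_j)\) about \(\bm{x}_i\) for each \(j\in\mathcal{S}(i)\). Since \(\|\bm{x}_j-\bm{x}_i\|\le C h\) with \(C\) depending only on the uniformly bounded stencil, each finite-difference approximation of \(\nabla^\ell\bm{u}(\bm{x}_i)\), \(\ell\le k\), equals the exact derivative plus \(\mathcal{O}(h^m)\), with the constant controlled by \(\|\bm{u}\|_{C^{k+m}}\) near \(\bm{x}_i\). Here I use Peetre locality: \(\mathcal{L}\) is a function of \((\bm{x},\bm{u},\dots,\nabla^k\bm{u})\), assumed locally Lipschitz in its jet arguments. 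Inserting the perturbed jet into \(\mathcal{L}\) then gives the \(\mathcal{O}(h^m)\) error. The uniform stencil bound and smoothness of admissible fields make the constant independent of \(i\) on interior nodes.

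For the affine form (\ref{eq:local_kernel_representation_pde}), the statement is conditional on the scheme being state-dependent affine, so I only need to identify \(K_{i,j}\) and \(\bm{b}_i\). I will set \(\bm{b}_i=\Psi_h(\mathbf{z}^0_{\mathcal{L},i})\), the response with the primary field set to zero and coordinates and prescribed data kept. The blocks \(K_{i,j}\) are read off as the coefficients multiplying \(\bm{u}_j\). For quasilinear terms such as \(u\,\partial_x u\), whose discrete form is \(u_i(u_{i+1}-u_{i-1})/(2h)\), the state-dependent factor goes into \(K_{i,j}(\mathbf{z})\).

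The main obstacle is that this splitting is not unique: a nonlinear term can be distributed among the \(\bm{u}_j\) in several ways. I will resolve this by fixing a canonical choice, following the proof of Hadamard's lemma: write \(\Psi_h(\mathbf{z})-\Psi_h(\mathbf{z}^0)=\sum_j\int_0^1\partial_{\bm{u}_j}\Psi_h(\mathbf{z}^{s})\,ds\;\bm{u}_j\), where \(\mathbf{z}^{s}\) scales the field values by \(s\). This gives a valid \(K_{i,j}\) whenever \(\Psi_h\) is \(C^1\) in the field values, so the affine form exists under a mild smoothness assumption and not only by hypothesis. The existence of \(K_{i,j}\) is all the theorem needs, and non-uniqueness does not affect it.

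Finally, for translation equivariance, let \(\tau\) be a grid shift with \(\mathcal{S}(i+\tau)=\mathcal{S}(i)+\tau\). Homogeneous coefficients and a translation-invariant scheme give \(\Psi_h\) identical at \(i\) and \(i+\tau\) in relative coordinates, so one interior rule is shared across grid points.
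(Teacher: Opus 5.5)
Your proposal is correct and follows the same four-part skeleton as the paper (locality by construction, consistency, affine form, reuse under translation equivariance), but you argue two of the steps differently.

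\emph{Consistency.} For \eqref{eq:local_pde_discretization}, the paper reads ``order \(m\)'' as meaning that the local truncation error is \(\mathcal{O}(h^m)\), so the estimate is definitional. Your Taylor-expansion argument derives it instead. It does so only under hypotheses the theorem does not state: that the scheme is \(\mathcal{L}\) evaluated at a discrete jet whose entries are each \(m\)-th order accurate, and that \(\mathcal{L}\) is locally Lipschitz in its jet arguments. This shows where \(h^m\) comes from, but it covers a narrower class than the paper's reading. It is better presented as an illustration of what ``order \(m\)'' means than as the proof of that step.

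\emph{Affine form.} For \eqref{eq:local_kernel_representation_pde}, the paper treats the representation purely as a hypothesis and simply collects the coefficient blocks. Your Hadamard-lemma construction, \(K_{i,j}(\mathbf{z})=\int_0^1\partial_{\bm{u}_j}\Psi_h(\mathbf{z}^{s})\,ds\) with \(\bm{b}_i=\Psi_h(\mathbf{z}^0_{\mathcal{L},i})\), proves that such a representation exists for every local rule that is \(C^1\) in the field values. It gives continuous coefficient blocks and fixes a canonical choice among the non-unique splittings you correctly point out.

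This is a real strengthening. Your \(\bm{b}_i\) is exactly the zero-state offset removed by calibration in Appendix~\ref{sec:zero_consistency}. So your argument also justifies the exact bias-free coefficient form that the paper merely assumes for the calibrated operator in its error decomposition. In addition, the continuity of your blocks is what makes a small coefficient error \(\varepsilon_{\theta}\) attainable on a compact window set.
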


\begin{proof}
By strict locality, \(\mathcal{L}[\bm{u}](\bm{x}_i)\) is determined by the finite jet of \(\bm{u}\) at \(\bm{x}_i\). A compact-stencil discretization approximates this local jet by finite combinations of nodal values in \(\mathcal{S}(i)\). Hence the discrete response depends only on \(\mathbf{z}_{\mathcal{L},i}\), which gives Eq.~\eqref{eq:local_rule_pde}. The consistency estimate in Eq.~\eqref{eq:local_pde_discretization} follows from the assumed discretization order \(m\). If the local discrete dependence is represented in affine stencil form, the corresponding coefficient blocks can be collected as \(K_{i,j}(\mathbf{z}_{\mathcal{L},i})\), giving Eq.~\eqref{eq:local_kernel_representation_pde}. Under translation-equivariant interior settings, this local rule can be shared without assigning independent rules to different interior nodes.
\end{proof}

Vectorizing Eq.~\eqref{eq:local_kernel_representation_pde} over all valid grid nodes gives
\begin{equation}
\label{eq:sparse_matrix_form_pde}
\mathcal{L}_h[\bm{u}]
=
A_h(\bm{u}_h)\bm{u}_h
+
\bm{b}_h,
\end{equation}
where
\begin{equation}
\label{eq:restricted_primary_field_pde}
\bm{u}_h
=
R_h\bm{u}.
\end{equation}
For a fixed grid, the coordinate dependence in the local rules is included in the discrete operator \(A_h\). The matrix \(A_h(\bm{u}_h)\) is block sparse and satisfies
\begin{equation}
\label{eq:block_sparsity_pde}
A_{i,j}(\bm{u}_h)
=
\bm{0}
\qquad
\text{whenever}
\qquad
j\notin\mathcal{S}(i).
\end{equation}
Combining Eqs.~\eqref{eq:local_pde_discretization} and \eqref{eq:sparse_matrix_form_pde} yields
\begin{equation}
\label{eq:sparse_matrix_consistency_pde}
R_h\mathcal{L}[\bm{u}]
=
A_h(\bm{u}_h)\bm{u}_h
+
\bm{b}_h
+
\mathcal{O}(h^m).
\end{equation}

\section{Proofs of zero-consistent reconstruction and error estimates}
\label{sec:zero_consistency}

\paragraph{Zero-point calibration and bias-free reconstruction}
For a local window centered at \(\bm{x}_i\), let \(\mathbf{z}_{\star,i}^{0}\) denote the zero-state window in which the contracted primary-field values are set to zero, while coordinates and other prescribed conditioning channels, if present, are retained. The raw local response is
\begin{equation}
\label{eq:app_raw_local_response}
\bm{y}_{\star,i}^{\mathrm{raw}}
=
\mathcal{F}_{h,\star}^{\mathrm{raw}}
\left(
\mathbf{z}_{\star,i}
\right).
\end{equation}
When the zero-state response is finite, define the zero-state offset by
\begin{equation}
\label{eq:app_zero_state_offset}
\bm{b}_{\star,i}^{0}
=
\mathcal{F}_{h,\star}^{\mathrm{raw}}
\left(
\mathbf{z}_{\star,i}^{0}
\right).
\end{equation}
The calibrated local target is obtained by subtracting this offset:
\begin{equation}
\label{eq:app_calibrated_local_target}
\bm{y}_{\star,i}
=
\bm{y}_{\star,i}^{\mathrm{raw}}
-
\bm{b}_{\star,i}^{0}.
\end{equation}
Equivalently,
\begin{equation}
\label{eq:app_calibrated_local_operator}
\mathcal{F}_{h,\star}
\left(
\mathbf{z}_{\star,i}
\right)
=
\mathcal{F}_{h,\star}^{\mathrm{raw}}
\left(
\mathbf{z}_{\star,i}
\right)
-
\mathcal{F}_{h,\star}^{\mathrm{raw}}
\left(
\mathbf{z}_{\star,i}^{0}
\right).
\end{equation}
Substituting \(\mathbf{z}_{\star,i}=\mathbf{z}_{\star,i}^{0}\) into Eq.~\eqref{eq:app_calibrated_local_operator} gives
\begin{equation}
\label{eq:app_proof_calibrated_zero_consistency}
\mathcal{F}_{h,\star}
\left(
\mathbf{z}_{\star,i}^{0}
\right)
=
\mathcal{F}_{h,\star}^{\mathrm{raw}}
\left(
\mathbf{z}_{\star,i}^{0}
\right)
-
\mathcal{F}_{h,\star}^{\mathrm{raw}}
\left(
\mathbf{z}_{\star,i}^{0}
\right)
=
\bm{0}.
\end{equation}

The offset \(\bm{b}_{\star,i}^{0}\) can be interpreted as a background response at the zero physical field. In a homogeneous problem with translation equivariance, such a background response is translation invariant. To see this, let \(T_{\boldsymbol{\Delta}}\) denote a spatial translation and let \(\bm{Z}_{\star}^{0}\) be the global zero-state input. If
\begin{equation}
\label{eq:app_translation_equivariance_raw_operator}
\mathcal{F}_{h,\star}^{\mathrm{raw}}
\left[
T_{\boldsymbol{\Delta}}\bm{Z}_{\star}
\right]
=
T_{\boldsymbol{\Delta}}
\mathcal{F}_{h,\star}^{\mathrm{raw}}
\left[
\bm{Z}_{\star}
\right],
\end{equation}
and the zero-state input is translation invariant, namely
\begin{equation}
\label{eq:app_zero_state_global_translation_invariance}
T_{\boldsymbol{\Delta}}\bm{Z}_{\star}^{0}
=
\bm{Z}_{\star}^{0},
\end{equation}
then
\begin{equation}
\label{eq:app_zero_state_offset_translation_invariance}
\mathcal{F}_{h,\star}^{\mathrm{raw}}
\left[
\bm{Z}_{\star}^{0}
\right]
=
\mathcal{F}_{h,\star}^{\mathrm{raw}}
\left[
T_{\boldsymbol{\Delta}}\bm{Z}_{\star}^{0}
\right]
=
T_{\boldsymbol{\Delta}}
\mathcal{F}_{h,\star}^{\mathrm{raw}}
\left[
\bm{Z}_{\star}^{0}
\right].
\end{equation}
Thus the zero-state response is unchanged by translation. On a uniform grid, this means that the zero-state offset is the same at translated nodes,
\begin{equation}
\label{eq:app_zero_state_offset_constant}
\bm{b}_{\star,i+\boldsymbol{\Delta}}^{0}
=
\bm{b}_{\star,i}^{0}.
\end{equation}
It is therefore a translation invariant background drift rather than a response generated by local field variation. If coordinate-dependent sources, prescribed potentials, anisotropy, or direction-dependent forcing are present, the zero-state response may depend on these conditioning variables instead of being spatially constant. The pointwise calibration in Eq.~\eqref{eq:app_calibrated_local_operator} removes this prescribed offset before the local field-dependent operator is learned.

The LGNO local reconstruction then uses a bias-free stencil contraction:
\begin{equation}
\label{eq:app_lgno_local_reconstruction}
\widehat{\bm{y}}_{\star,i}
=
\widehat{\mathcal{F}}_{h,\star}^{\theta}
\left(
\mathbf{z}_{\star,i}
\right)
=
\sum_{j\in\mathcal{S}(i)}
\bm{G}_{\star,i,j}^{\theta}
\left(
\mathbf{z}_{\star,i}
\right)
\bm{u}_{i,j}.
\end{equation}
By definition of \(\mathbf{z}_{\star,i}^{0}\),
\begin{equation}
\label{eq:app_zero_state_primary_values}
\mathbf{z}_{\star,i}
=
\mathbf{z}_{\star,i}^{0}
\quad
\Longrightarrow
\quad
\bm{u}_{i,j}
=
\bm{0},
\qquad
j\in\mathcal{S}(i).
\end{equation}
Therefore,
\begin{equation}
\label{eq:app_proof_lgno_zero_consistency}
\widehat{\mathcal{F}}_{h,\star}^{\theta}
\left(
\mathbf{z}_{\star,i}^{0}
\right)
=
\sum_{j\in\mathcal{S}(i)}
\bm{G}_{\star,i,j}^{\theta}
\left(
\mathbf{z}_{\star,i}^{0}
\right)
\bm{u}_{i,j}
=
\bm{0}.
\end{equation}
This proves Eq.~\eqref{eq:zero_consistent_lgno_output}. Hence the zero-state output is removed first by calibration and then enforced by the reconstruction structure. Physically, the learned stencil cannot create a translation invariant background drift when no contracted primary-field values are present.

\paragraph{Error decomposition}
Assume that the calibrated compact-stencil operator admits the exact coefficient form
\begin{equation}
\label{eq:app_exact_local_coefficient_form}
\left(
\mathcal{F}_{h,\star}
[
\bm{Z}_{\star}
]
\right)_i
=
\mathcal{F}_{h,\star}
\left(
\mathbf{z}_{\star,i}
\right)
=
\sum_{j\in\mathcal{S}(i)}
\bm{G}_{\star,i,j}^{\ast}
\left(
\mathbf{z}_{\star,i}
\right)
\bm{u}_{i,j}.
\end{equation}
The assembled LGNO operator is
\begin{equation}
\label{eq:app_assembled_lgno_operator}
\left(
\widehat{\mathcal{F}}_{h,\star}^{\theta}
[
\bm{Z}_{\star}
]
\right)_i
=
\widehat{\mathcal{F}}_{h,\star}^{\theta}
\left(
\mathbf{z}_{\star,i}
\right)
=
\sum_{j\in\mathcal{S}(i)}
\bm{G}_{\star,i,j}^{\theta}
\left(
\mathbf{z}_{\star,i}
\right)
\bm{u}_{i,j}.
\end{equation}
For an admissible compact set of local windows \(\mathcal{K}_{\star}\), define
\begin{equation}
\label{eq:app_coefficient_approximation_error}
\varepsilon_{\theta}
=
\sup_{\mathbf{z}_{\star,i}\in\mathcal{K}_{\star}}
\max_{j\in\mathcal{S}(i)}
\left\|
\bm{G}_{\star,i,j}^{\theta}
\left(
\mathbf{z}_{\star,i}
\right)
-
\bm{G}_{\star,i,j}^{\ast}
\left(
\mathbf{z}_{\star,i}
\right)
\right\|_2 .
\end{equation}
The discrete norm is
\begin{equation}
\label{eq:app_discrete_l2_norm}
\|\bm{v}\|_{\ell_h^2}^2
=
h^d
\sum_i
|\bm{v}_i|^2.
\end{equation}
The total error satisfies
\begin{equation}
\label{eq:app_total_error_decomposition}
\begin{aligned}
R_h
\mathcal{F}_{\star}
[
\bm{Z}_{\star}
]
-
\widehat{\mathcal{F}}_{h,\star}^{\theta}
[
\bm{Z}_{\star}
]
=
\left(
R_h
\mathcal{F}_{\star}
[
\bm{Z}_{\star}
]
-
\mathcal{F}_{h,\star}
[
\bm{Z}_{\star}
]
\right)
+
\left(
\mathcal{F}_{h,\star}
[
\bm{Z}_{\star}
]
-
\widehat{\mathcal{F}}_{h,\star}^{\theta}
[
\bm{Z}_{\star}
]
\right).
\end{aligned}
\end{equation}
Using the assumed discretization estimate and the triangle inequality,
\begin{equation}
\label{eq:app_error_after_triangle}
\left\|
R_h
\mathcal{F}_{\star}
[
\bm{Z}_{\star}
]
-
\widehat{\mathcal{F}}_{h,\star}^{\theta}
[
\bm{Z}_{\star}
]
\right\|_{\ell_h^2}
\le
C_d h^p 
+
\left\|
\mathcal{F}_{h,\star}
[
\bm{Z}_{\star}
]
-
\widehat{\mathcal{F}}_{h,\star}^{\theta}
[
\bm{Z}_{\star}
]
\right\|_{\ell_h^2}.
\end{equation}
For each valid grid node, Eqs.~\eqref{eq:app_exact_local_coefficient_form} and \eqref{eq:app_assembled_lgno_operator} give
\begin{equation}
\label{eq:app_local_coefficient_error}
\left(
\mathcal{F}_{h,\star}
[
\bm{Z}_{\star}
]
-
\widehat{\mathcal{F}}_{h,\star}^{\theta}
[
\bm{Z}_{\star}
]
\right)_i
=
\sum_{j\in\mathcal{S}(i)}
\left(
\bm{G}_{\star,i,j}^{\ast}
\left(
\mathbf{z}_{\star,i}
\right)
-
\bm{G}_{\star,i,j}^{\theta}
\left(
\mathbf{z}_{\star,i}
\right)
\right)
\bm{u}_{i,j}.
\end{equation}
From Eq.~\eqref{eq:app_coefficient_approximation_error},
\begin{equation}
\label{eq:app_coefficient_error_bound}
\left\|
\bm{G}_{\star,i,j}^{\ast}
\left(
\mathbf{z}_{\star,i}
\right)
-
\bm{G}_{\star,i,j}^{\theta}
\left(
\mathbf{z}_{\star,i}
\right)
\right\|_2
\le
\varepsilon_{\theta}.
\end{equation}
Let
\begin{equation}
\label{eq:app_maximum_stencil_size}
s
=
\max_i
|\mathcal{S}(i)|.
\end{equation}
Then Cauchy--Schwarz gives
\begin{equation}
\label{eq:app_local_error_cauchy_schwarz}
\left|
\left(
\mathcal{F}_{h,\star}
[
\bm{Z}_{\star}
]
-
\widehat{\mathcal{F}}_{h,\star}^{\theta}
[
\bm{Z}_{\star}
]
\right)_i
\right|^2
\le
s
\varepsilon_{\theta}^2
\sum_{j\in\mathcal{S}(i)}
|\bm{u}_{i,j}|^2.
\end{equation}
Multiplying by \(h^d\), summing over all valid nodes, and using the uniformly bounded stencil overlap,
\begin{equation}
\label{eq:app_learned_coefficient_error_bound}
\left\|
\mathcal{F}_{h,\star}
[
\bm{Z}_{\star}
]
-
\widehat{\mathcal{F}}_{h,\star}^{\theta}
[
\bm{Z}_{\star}
]
\right\|_{\ell_h^2}
\le
C_{\star}
\varepsilon_{\theta}
\|\bm{u}\|_{\ell_h^2},
\end{equation}
where \(C_{\star}\) is independent of \(h\) and \(\theta\). Combining Eqs.~\eqref{eq:app_error_after_triangle} and \eqref{eq:app_learned_coefficient_error_bound} yields
\begin{equation}
\label{eq:app_final_error_decomposition}
\left\|
R_h
\mathcal{F}_{\star}
[
\bm{Z}_{\star}
]
-
\widehat{\mathcal{F}}_{h,\star}^{\theta}
[
\bm{Z}_{\star}
]
\right\|_{\ell_h^2}
\le
C_d h^p
+
C_{\star}
\varepsilon_{\theta}
\|\bm{u}\|_{\ell_h^2}.
\end{equation}
This proves Eq.~\eqref{eq:zero_consistent_error_decomposition}.

\paragraph{Rollout estimate}
For temporal evolution prediction, define
\begin{equation}
\label{eq:app_temporal_task_inputs}
\bm{Z}_{\mathrm{evo}}^n
=
(\bm{x},\bm{u}^n,\bm{f}^n),
\qquad
\widehat{\bm{Z}}_{\mathrm{evo}}^n
=
(\bm{x},\widehat{\bm{u}}^n,\bm{f}^n).
\end{equation}
The exact and learned one-step updates are
\begin{equation}
\label{eq:app_exact_one_step_update}
\bm{u}^{n+1}
=
\bm{u}^n
+
\Delta t
\mathcal{F}_{h,\mathrm{evo}}
(
\bm{u}^n,
\bm{f}^n
),
\end{equation}
and
\begin{equation}
\label{eq:app_lgno_one_step_update}
\widehat{\bm{u}}^{n+1}
=
\widehat{\bm{u}}^n
+
\Delta t
\widehat{\mathcal{F}}_{h,\mathrm{evo}}^{\theta}
[
\widehat{\bm{Z}}_{\mathrm{evo}}^n
].
\end{equation}
Let
\begin{equation}
\label{eq:app_rollout_error_definition}
\bm{e}^n
=
\bm{u}^n
-
\widehat{\bm{u}}^n.
\end{equation}
Assume that \(\mathcal{F}_{h,\mathrm{evo}}\) is Lipschitz continuous in the primary field for fixed source-related input:
\begin{equation}
\label{eq:app_lipschitz_assumption}
\left\|
\mathcal{F}_{h,\mathrm{evo}}
(
\bm{u},
\bm{f}
)
-
\mathcal{F}_{h,\mathrm{evo}}
(
\bm{v},
\bm{f}
)
\right\|_{\ell_h^2}
\le
L
\|\bm{u}-\bm{v}\|_{\ell_h^2}.
\end{equation}
Assume also the one-step approximation bound
\begin{equation}
\label{eq:app_one_step_error_bound}
\left\|
\mathcal{F}_{h,\mathrm{evo}}
(
\bm{u},
\bm{f}
)
-
\widehat{\mathcal{F}}_{h,\mathrm{evo}}^{\theta}
[
\bm{Z}_{\mathrm{evo}}
]
\right\|_{\ell_h^2}
\le
\zeta_{\theta}(h).
\end{equation}
Using Eqs.~\eqref{eq:app_exact_one_step_update}--\eqref{eq:app_rollout_error_definition},
\begin{equation}
\label{eq:app_error_recursion_expansion}
\begin{aligned}
\bm{e}^{n+1}
&=
\bm{e}^n
+
\Delta t
\left(
\mathcal{F}_{h,\mathrm{evo}}
(
\bm{u}^n,
\bm{f}^n
)
-
\widehat{\mathcal{F}}_{h,\mathrm{evo}}^{\theta}
[
\widehat{\bm{Z}}_{\mathrm{evo}}^n
]
\right) \\
&=
\bm{e}^n
+
\Delta t
\left(
\mathcal{F}_{h,\mathrm{evo}}
(
\bm{u}^n,
\bm{f}^n
)
-
\mathcal{F}_{h,\mathrm{evo}}
(
\widehat{\bm{u}}^n,
\bm{f}^n
)
\right) \\
&\quad
+
\Delta t
\left(
\mathcal{F}_{h,\mathrm{evo}}
(
\widehat{\bm{u}}^n,
\bm{f}^n
)
-
\widehat{\mathcal{F}}_{h,\mathrm{evo}}^{\theta}
[
\widehat{\bm{Z}}_{\mathrm{evo}}^n
]
\right).
\end{aligned}
\end{equation}
Taking the \(\ell_h^2\) norm and applying Eqs.~\eqref{eq:app_lipschitz_assumption} and \eqref{eq:app_one_step_error_bound},
\begin{equation}
\label{eq:app_error_recursion_bound}
\|\bm{e}^{n+1}\|_{\ell_h^2}
\le
(1+\Delta t L)
\|\bm{e}^{n}\|_{\ell_h^2}
+
\Delta t
\zeta_{\theta}(h).
\end{equation}
Iteration gives
\begin{equation}
\label{eq:app_iterated_error_recursion}
\|\bm{e}^{n}\|_{\ell_h^2}
\le
(1+\Delta t L)^n
\|\bm{e}^{0}\|_{\ell_h^2}
+
\Delta t
\zeta_{\theta}(h)
\sum_{m=0}^{n-1}
(1+\Delta t L)^m.
\end{equation}
For \(L>0\),
\begin{equation}
\label{eq:app_geometric_sum}
\sum_{m=0}^{n-1}
(1+\Delta t L)^m
=
\frac{(1+\Delta t L)^n-1}{\Delta t L}.
\end{equation}
Since \(T=n\Delta t\),
\begin{equation}
\label{eq:app_exponential_bound}
(1+\Delta t L)^n
\le
e^{LT}.
\end{equation}
Combining Eqs.~\eqref{eq:app_iterated_error_recursion}--\eqref{eq:app_exponential_bound},
\begin{equation}
\label{eq:app_final_rollout_error_bound}
\|\bm{e}^{n}\|_{\ell_h^2}
\le
e^{LT}
\|\bm{e}^{0}\|_{\ell_h^2}
+
\frac{e^{LT}-1}{L}
\zeta_{\theta}(h),
\qquad
T=n\Delta t.
\end{equation}
This proves Eq.~\eqref{eq:zero_consistent_rollout_error}.

When \(L=0\), Eq.~\eqref{eq:app_error_recursion_bound} reduces to
\begin{equation}
\label{eq:app_zero_lipschitz_recursion}
\|\bm{e}^{n+1}\|_{\ell_h^2}
\le
\|\bm{e}^{n}\|_{\ell_h^2}
+
\Delta t
\zeta_{\theta}(h).
\end{equation}
Thus,
\begin{equation}
\label{eq:app_zero_lipschitz_final_bound}
\|\bm{e}^{n}\|_{\ell_h^2}
\le
\|\bm{e}^{0}\|_{\ell_h^2}
+
T
\zeta_{\theta}(h).
\end{equation}
The second term in Eq.~\eqref{eq:zero_consistent_rollout_error} is therefore interpreted as \(T\zeta_{\theta}(h)\).

The rollout estimate is conditional on a bounded admissible trajectory, the Lipschitz bound in Eq.~\eqref{eq:app_lipschitz_assumption}, and the one-step bound in Eq.~\eqref{eq:app_one_step_error_bound}. Together with Eq.~\eqref{eq:zero_consistent_error_decomposition}, it links the autoregressive error to the compact-stencil discretization error \(h^p\) and the learned coefficient error \(\varepsilon_{\theta}\).

\section{Derivation of symmetry-induced network folding}
\label{sec:symmetry_folding}

This appendix derives Eqs.~\eqref{eq:symmetry_orbit_augmentation}--\eqref{eq:symmetry_folded_reconstruction} and clarifies the assumptions under which componentwise folding is valid. Let
\begin{equation}
\label{eq:app_symmetry_valid_pair}
\bm{Y}_{\star}
=
\mathcal{F}_{\star}
\left[
\bm{Z}_{\star}
\right]
\end{equation}
be a calibrated input-target pair. Here \(\mathcal{F}_{\star}\) denotes the exact calibrated task operator, whereas \(\widehat{\mathcal{F}}_{\star}^{\theta}\) denotes the learned LGNO approximation. Therefore, quantities generated by \(\mathcal{F}_{\star}\) are exact targets and are not marked with a hat. Quantities generated by \(\widehat{\mathcal{F}}_{\star}^{\theta}\) are model predictions and are marked with a hat.

Assume that the calibrated task operator is \(\Gamma\)-equivariant:
\begin{equation}
\label{eq:app_symmetry_equivariance_repeat}
\mathcal{F}_{\star}
\left[
\rho_{\mathrm{in}}(\gamma)\bm{Z}_{\star}
\right]
=
\rho_{\mathrm{out}}(\gamma)
\mathcal{F}_{\star}
\left[
\bm{Z}_{\star}
\right],
\qquad
\forall \gamma\in\Gamma .
\end{equation}
Substituting Eq.~\eqref{eq:app_symmetry_valid_pair} into Eq.~\eqref{eq:app_symmetry_equivariance_repeat} gives
\begin{equation}
\label{eq:app_symmetry_transformed_output}
\mathcal{F}_{\star}
\left[
\rho_{\mathrm{in}}(\gamma)\bm{Z}_{\star}
\right]
=
\rho_{\mathrm{out}}(\gamma)
\bm{Y}_{\star}.
\end{equation}
Hence
\begin{equation}
\label{eq:app_symmetry_transformed_pair}
\left(
\rho_{\mathrm{in}}(\gamma)\bm{Z}_{\star},
\rho_{\mathrm{out}}(\gamma)\bm{Y}_{\star}
\right)
\end{equation}
is also generated by the same calibrated operator. Varying \(\gamma\in\Gamma\) gives the orbit of valid input-target pairs in Eq.~\eqref{eq:symmetry_orbit_augmentation}. If \(\Gamma\) is continuous, this orbit is usually used in practice by sampling a finite number of group actions.

We next derive the componentwise folding relation. Let \(\mathcal{Q}\) be a set of target components, field channels, or coordinate directions lying in the same \(\Gamma\)-orbit. Fix a canonical component \(q_0\in\mathcal{Q}\). For each \(q\in\mathcal{Q}\), assume that there exists a transformation \(\gamma_q\in\Gamma\) that maps the \(q\)-component to the canonical component \(q_0\). This assumption is required for folding; if no such canonical map exists, the components should not be folded.

Let \(\mathcal{E}_i\) denote local window extraction:
\begin{equation}
\label{eq:app_local_window_extraction}
\mathbf{z}_{\star,i}
=
\mathcal{E}_i
\left[
\bm{Z}_{\star}
\right].
\end{equation}
We assume that the transformations used for folding are admissible for the grid and boundary treatment, so that applying the group action and then extracting a local window is well defined. For transformations that do not map grid nodes exactly to grid nodes, the representation \(\rho_{\mathrm{in}}\) is understood to include the interpolation or resampling used by the discretization.

Applying \(\gamma_q\) to the input field and then extracting the local window gives
\begin{equation}
\label{eq:app_transformed_local_window}
\mathbf{z}_{\star,i}^{(\gamma_q)}
=
\mathcal{E}_i
\left[
\rho_{\mathrm{in}}(\gamma_q)\bm{Z}_{\star}
\right].
\end{equation}
The corresponding transformed target follows from Eq.~\eqref{eq:app_symmetry_transformed_output}:
\begin{equation}
\label{eq:app_transformed_local_target}
\bm{y}_{\star,i}^{(\gamma_q)}
=
\left(
\rho_{\mathrm{out}}(\gamma_q)\bm{Y}_{\star}
\right)_i
=
\left(
\mathcal{F}_{\star}
\left[
\rho_{\mathrm{in}}(\gamma_q)\bm{Z}_{\star}
\right]
\right)_i .
\end{equation}
Thus the transformed target is still an exact target generated by the same calibrated operator, so it does not carry a hat.

Let \(P_{q_0}^{\mathrm{in}}\) and \(P_{q_0}^{\mathrm{out}}\) denote the projections onto the canonical input and target components after the symmetry action. The folded local input and target are defined by
\begin{equation}
\label{eq:app_folded_local_sample}
\widetilde{\mathbf{z}}_{\star,i,q}
=
P_{q_0}^{\mathrm{in}}
\mathbf{z}_{\star,i}^{(\gamma_q)}
=
P_{q_0}^{\mathrm{in}}
\mathcal{E}_i
\left[
\rho_{\mathrm{in}}(\gamma_q)\bm{Z}_{\star}
\right],
\qquad
\widehat{\widetilde{\mathbf{y}}}_{\star,i,q}
=
P_{q_0}^{\mathrm{out}}
\bm{y}_{\star,i}^{(\gamma_q)}
=
P_{q_0}^{\mathrm{out}}
\left(
\rho_{\mathrm{out}}(\gamma_q)\bm{Y}_{\star}
\right)_i .
\end{equation}
This gives Eq.~\eqref{eq:symmetry_component_folding}. Combining Eqs.~\eqref{eq:app_transformed_local_target} and \eqref{eq:app_folded_local_sample} gives
\begin{equation}
\label{eq:app_folded_target_from_same_operator}
\widehat{\widetilde{\mathbf{y}}}_{\star,i,q}
=
P_{q_0}^{\mathrm{out}}
\left(
\mathcal{F}_{\star}
\left[
\rho_{\mathrm{in}}(\gamma_q)\bm{Z}_{\star}
\right]
\right)_i .
\end{equation}
Therefore, the folded target is the canonical component of the exact output produced by the same calibrated operator acting on the transformed input. By locality, this canonical output component depends only on the corresponding folded local window and the prescribed conditioning channels. Hence all folded samples can be treated as samples of the same canonical local learning problem.

It remains to define the primary-field values contracted in the LGNO stencil. Since the primary field \(\bm{u}\) is contained in \(\bm{Z}_{\star}\), the group action on \(\bm{u}\) is induced by \(\rho_{\mathrm{in}}\). Denote this induced action by \(\rho_{\mathrm{u}}\), and let \(\mathcal{E}_i^{\mathrm{u}}\) denote the primary-field part of the local window extraction. For \(j\in\mathcal{S}(i)\), define
\begin{equation}
\label{eq:app_folded_primary_field}
\widetilde{\bm{u}}_{i,j,q}
=
\left[
P_{q_0}^{\mathrm{u}}
\mathcal{E}_i^{\mathrm{u}}
\left[
\rho_{\mathrm{u}}(\gamma_q)\bm{u}
\right]
\right]_j,
\qquad
j\in\mathcal{S}(i),
\end{equation}
where \(P_{q_0}^{\mathrm{u}}\) selects the canonical primary-field channels after the symmetry action.

Under the compact-stencil representation of the calibrated local operator, there exists an exact canonical coefficient rule \(\bm{G}_{\star,i,j}^{\ast}\) such that
\begin{equation}
\label{eq:app_exact_folded_stencil_form}
\widehat{\widetilde{\mathbf{y}}}_{\star,i,q}
=
\sum_{j\in\mathcal{S}(i)}
\bm{G}_{\star,i,j}^{\ast}
\left(
\widetilde{\mathbf{z}}_{\star,i,q}
\right)
\widetilde{\bm{u}}_{i,j,q}.
\end{equation}
Here \(\widehat{\widetilde{\mathbf{y}}}_{\star,i,q}\) is the exact folded target, so it is not marked with a hat.

LGNO replaces the exact coefficient rule by a learned coefficient generator:
\begin{equation}
\label{eq:app_folded_shared_generator}
\Phi_{\boldsymbol{\theta}}
:
\widetilde{\mathbf{z}}_{\star,i,q}
\mapsto
\left\{
\bm{G}_{\star,i,j}^{\theta}
\left(
\widetilde{\mathbf{z}}_{\star,i,q}
\right)
\right\}_{j\in\mathcal{S}(i)}.
\end{equation}
The model prediction for the folded local target is therefore
\begin{equation}
\label{eq:app_folded_local_reconstruction}
\widehat{\widehat{\widetilde{\mathbf{y}}}}_{\star,i,q}
=
\sum_{j\in\mathcal{S}(i)}
\bm{G}_{\star,i,j}^{\theta}
\left(
\widetilde{\mathbf{z}}_{\star,i,q}
\right)
\widetilde{\bm{u}}_{i,j,q},
\qquad
q\in\mathcal{Q}.
\end{equation}
This gives Eq.~\eqref{eq:symmetry_folded_reconstruction}. The hat on \(\widehat{\widehat{\widetilde{\mathbf{y}}}}_{\star,i,q}\) indicates that this quantity is produced by the learned LGNO reconstruction, not by the exact calibrated operator.

The construction requires three conditions. First, the calibrated task operator must satisfy the equivariance relation in Eq.~\eqref{eq:app_symmetry_equivariance_repeat}. Second, the components in \(\mathcal{Q}\) must be related by admissible group actions so that a canonical component \(q_0\) is well defined. Third, the local stencil and boundary treatment must be compatible with the transformations used for folding. If zero-point calibration or residualization is used, the calibrated target must still transform under \(\rho_{\mathrm{out}}\). If symmetry-breaking sources, potentials, anisotropy, or direction-dependent forcing are present and are not included in an equivariant input representation, the folded samples no longer correspond to the same canonical local operator law.

\section{Detailed benchmark prediction results of selected models}
\label{Appendix:all_results}

This appendix reports the benchmark prediction results for the four representative models used in the focused comparison: MLPConv, DeepONet, LOINN, and LGNO. These methods cover a local stencil map, a global branch--trunk operator map, a few-shot local operator-learning model, and the proposed differential-stencil reconstruction. For conciseness, the main paper reports a compact LGNO-only summary for the principal benchmark cases analyzed in the main discussion, as shown in Table~\ref{tab:concise_results}. The detailed tables below retain all available configurations of these four models across the 1D, 2D, and 3D benchmarks. For each case, we report the number of trainable parameters, the final training loss, the test relative \(L^2\) error, and the error ratio with respect to the best-performing model in the same benchmark.

All selected models were evaluated using the same unified experimental scripts and data-processing pipeline across the different benchmark problems. No benchmark-specific redesign or extensive case-by-case hyperparameter tuning was performed for the retained baseline methods. The comparison is therefore intended to reflect model behavior under a common low-data protocol rather than the fully optimized limit of each architecture.

\begin{table*}[htbp]
\centering
\caption{Benchmark prediction results of selected models.}
\label{tab:benchmark_prediction_results_selected}
\setlength{\tabcolsep}{3.5pt}
\renewcommand{\arraystretch}{1}
\begin{adjustbox}{max width=\textwidth}
\begin{tabular}{l l l l l l l}
\toprule
Benchmark & Method & Hidden & Parameters & Training loss & Test rel. \(L^2\) & Error ratio \\
\midrule
\multirow{8}{*}{1D Diffusion}
& \textbf{LGNO} & \textbf{2} & \textbf{\num{23}} & \textbf{\num{5.891e-04}} & \textbf{0.012} & \textbf{1.0\(\times\)} \\
& LGNO & 4 & \num{51} & \num{2.395e-03} & 0.025 & 2.0\(\times\) \\
& MLPConv & 4 & \num{41} & 0.014 & 0.053 & 4.4\(\times\) \\
& LOINN & 4 & \num{85} & 0.126 & 0.857 & 70.9\(\times\) \\
& MLPConv & 2 & \num{29} & 27.306 & 0.947 & 78.3\(\times\) \\
& LOINN & 2 & \num{35} & 74.060 & 3.068 & 253.7\(\times\) \\
& DeepONet & 4 & \num{221} & 165.298 & 5.615 & 464.3\(\times\) \\
& DeepONet & 2 & \num{95} & 178.280 & 23.492 & 1942.6\(\times\) \\
\midrule
\multirow{5}{*}{2D Burgers}
& \textbf{LGNO} & \textbf{4} & \textbf{\num{105}} & \textbf{\num{5.076e-03}} & \textbf{0.050} & \textbf{1.0\(\times\)} \\
& LGNO & 8 & \num{233} & \num{6.446e-03} & 0.093 & 1.9\(\times\) \\
& MLPConv & 4 & \num{65} & \num{3.754e-03} & 0.380 & 7.7\(\times\) \\
& LOINN & 4 & \num{85} & 0.020 & 0.463 & 9.4\(\times\) \\
& DeepONet & 8 & \num{175585} & 0.099 & 1.003 & 20.3\(\times\) \\
\midrule
\multirow{5}{*}{2D Navier--Stokes}
& \textbf{LGNO} & \textbf{16} & \textbf{\num{1188}} & \textbf{\num{3.371e-07}} & \textbf{0.008} & \textbf{1.0\(\times\)} \\
& LGNO & 8 & \num{548} & \num{8.170e-07} & 0.010 & 1.3\(\times\) \\
& MLPConv & 16 & \num{610} & \num{2.882e-06} & 0.017 & 2.1\(\times\) \\
& DeepONet & 16 & \num{34738} & \num{4.491e-03} & 1.014 & 124.7\(\times\) \\
& LOINN & 16 & \num{1170} & \num{3.135e-08} & -- & -- \\
\midrule
\multirow{6}{*}{\makecell[l]{2D Schr\"odinger\\(Gross--Pitaevskii)}}
& \textbf{LGNO(folded)} & \textbf{12} & \textbf{\num{370}} & \textbf{\num{2.510e-05}} & \textbf{0.116} & \textbf{1.0\(\times\)} \\
& LGNO(folded) & 24 & \num{1018} & \num{2.393e-04} & 0.283 & 2.4\(\times\) \\
& LGNO(unfolded) & 12 & \num{500} & \num{2.836e-05} & 0.379 & 3.3\(\times\) \\
& DeepONet & 12 & \num{148598} & 1.182 & 1.067 & 9.2\(\times\) \\
& LOINN & 12 & \num{674} & 1.217 & 3.085 & 26.6\(\times\) \\
& MLPConv & 12 & \num{518} & 0.592 & 3.548 & 30.6\(\times\) \\
\midrule
\multirow{4}{*}{\makecell[l]{3D Schr\"odinger\\(Perturbed Harmonic)}}
& \textbf{LGNO} & \textbf{16} & \textbf{\num{988}} & \textbf{\num{9.790e-10}} & \textbf{0.003} & \textbf{1.0\(\times\)} \\
& MLPConv & 16 & \num{1618} & \num{9.563e-07} & 0.019 & 6.2\(\times\) \\
& LOINN & 16 & \num{1890} & \num{7.975e-07} & 0.027 & 8.7\(\times\) \\
& DeepONet & 16 & \num{198594} & \num{8.447e-03} & 1.122 & 358.4\(\times\) \\
\bottomrule
\end{tabular}
\end{adjustbox}
\end{table*}

Against these baselines, LGNO uses a local coefficient generator together with a differential-stencil reconstruction. This restricted comparison helps isolate how direct operator mapping and differential local reconstruction behave when only one trajectory or a small number of samples is available. The reported results should therefore be interpreted under this unified experimental setting, while the main conclusions focus on the observed accuracy parameter trade-off and local generalization behavior of LGNO.

\section{Additional results for the 2D Navier--Stokes system} \label{Appendix:2DNS}

As a supplement to Paragraph~\ref{Para:2DNS}, where the concise comparison between LGNO-16 and MLPConv-16 has been presented, Fig.~\ref{2DNS_Appendix} provides more detailed visual comparisons for the 2D Navier--Stokes example. The baseline models such as MLPConv and DeepONet are able to reproduce the main flow structures at early rollout times, but their predictions become progressively less accurate as the rollout proceeds. In later snapshots, visible errors appear in both velocity components, including blurred local structures, weakened vortical features, and misplaced flow patterns. These additional visual results complement the quantitative comparison reported in the main text and further illustrate the long-time rollout advantage of LGNO.

\begin{figure}[!htbp]
	\centering
	\includegraphics[width=\linewidth]{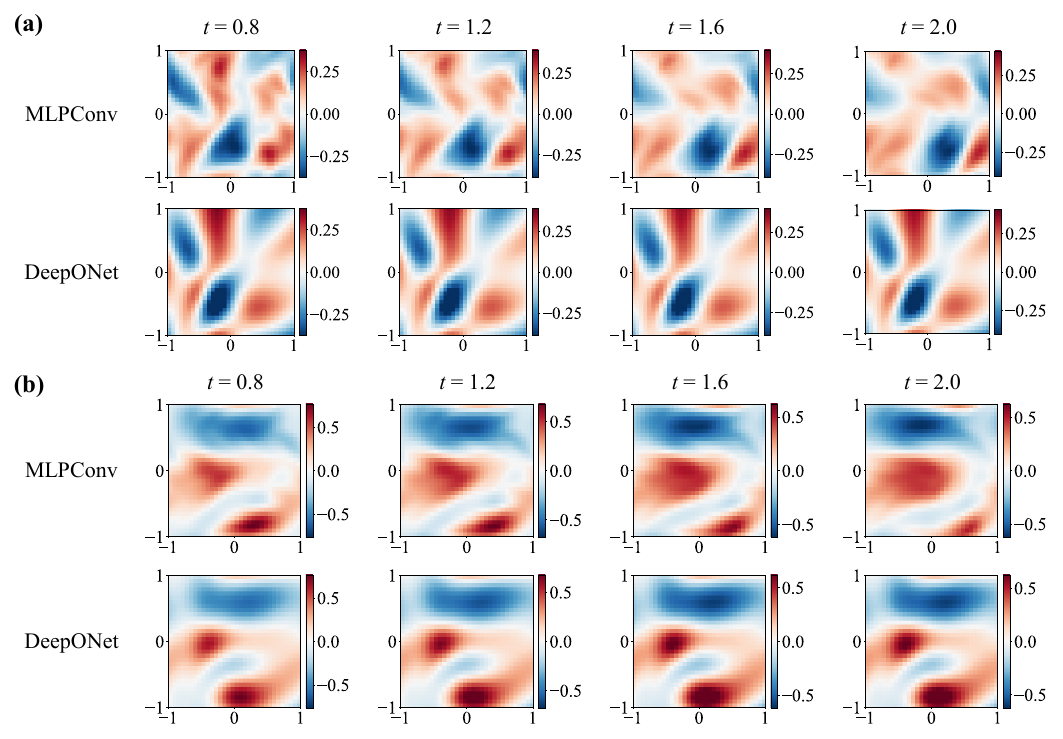}
    \caption{
    Baseline predictions for the 2D Navier--Stokes system.
    Rows correspond to MLPConv and DeepONet.
    Columns show snapshots at $t=0.8,1.2,1.6,2.0$.
    (a) Horizontal velocity component $u$.
    (b) Vertical velocity component $v$.
    }
    \label{2DNS_Appendix}
\end{figure}

\section{Additional results for the 2D Burgers equation} \label{Appendix:2D Burgers}

As a supplement to the concise comparison presented in Section~\ref{2D Burgers}, Fig.~\ref{[Appendix]2DBurger} provides additional qualitative comparisons on the 2D Burgers test set. In these examples, LGNO-8 remains closer to the reference solution, especially in regions with sharper spatial variations.

\begin{figure}[!htbp]
	\centering
\includegraphics[width=\linewidth]{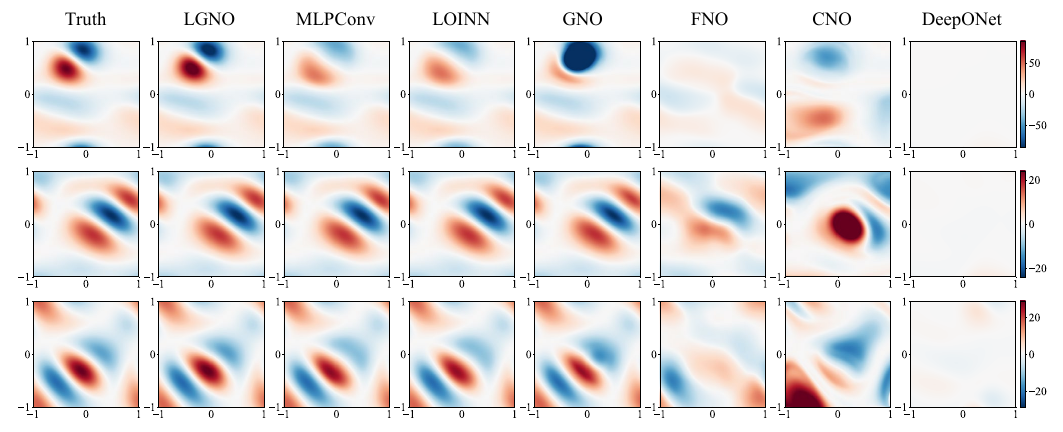}
    \caption{
    Additional test examples for the 2D Burgers equation.
    From left to right: reference solution, LGNO-8, MLPConv-8, LOINN-8, GNO-8, FNO-8, CNO-8,
    and DeepONet-8.
    }
    \label{[Appendix]2DBurger}
\end{figure}

\section{Dataset examples for the 2D Gross--Pitaevskii equation} \label{Appendix:2DGP}

\begin{figure}[!htbp]
	\centering
\includegraphics[width=0.96\linewidth]{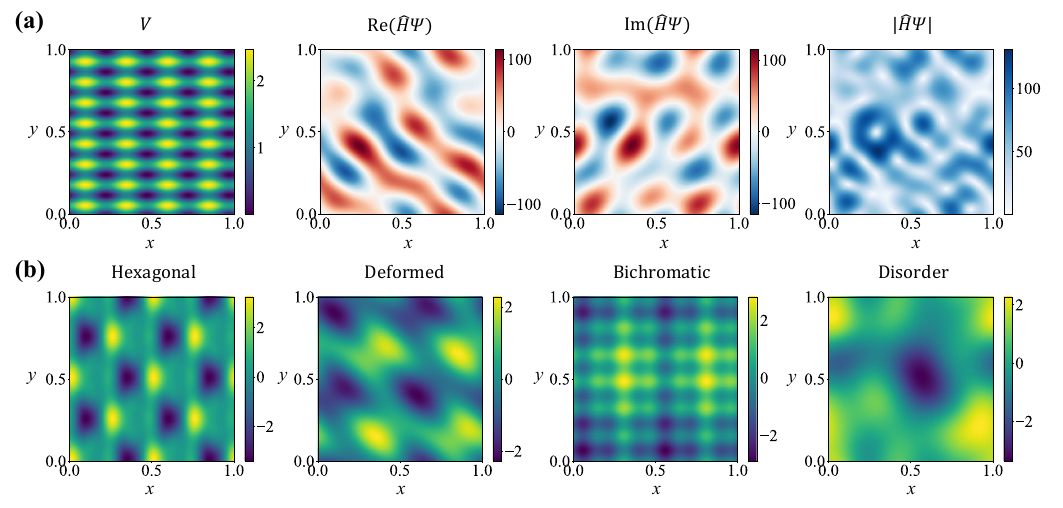}
	\caption{
    Dataset for the 2D Gross--Pitaevskii equation.
    (a) Training sample generated from an optical lattice potential.
    From left to right: potential field $V$, real part
    $\mathrm{Re}(\hat{H}\psi)$, imaginary part
    $\mathrm{Im}(\hat{H}\psi)$, and magnitude $|\hat{H}\psi|$.
    (b) Test potentials used in the experiments: hexagonal, deformed,
    bichromatic, and disorder potentials.
    }
    \label{fig:appendix_2dgp_dataset}
\end{figure}

As a supplement to Section~\ref{paragraph:2DGP}, \figref{fig:appendix_2dgp_dataset} illustrates the dataset construction for the 2D Gross--Pitaevskii equation. 
The training data are generated with an optical lattice potential, while the test set uses four different periodic potential types, namely hexagonal, deformed, bichromatic, and disorder potentials. This train--test split is designed to evaluate how the learned operator generalizes to potential fields that are not observed during training.

\section{Robustness results}
\label{app:robustness}

As discussed in Section~\ref{sec:robustness}, the robustness test evaluates the sensitivity of LGNO-8 and MLPConv-8 to multiplicative input noise. Table~\ref{tab:noise_casewise_with_summary} provides the complete case-wise results for the five evaluation seeds \(447\)--\(451\), together with the average relative \(L^2\) error at each noise level.

\begin{table}[!htbp]
\centering
\caption{Case-wise comparison of relative \(L^2\) errors under different noise
levels. Case 1--5 correspond to evaluation seeds \(447-451\),
respectively. For each case, the lower error between LGNO-8 and MLPConv-8 is
highlighted in blue. The column ``Avg.'' reports the mean error over the five
cases at each noise level. The last column reports the relative advantage
computed from the mean error, \((e_{\mathrm{MLPConv}}-e_{\mathrm{LGNO}})/
e_{\mathrm{MLPConv}}\). Positive values indicate lower average error for
LGNO-8.}
\label{tab:noise_casewise_with_summary}
\begin{adjustbox}{max width=\linewidth}
\begin{tabular}{c|c|ccccc|c|c}
\toprule
\(\alpha\) & Method & Case 1 & Case 2 & Case 3 & Case 4 & Case 5 & Avg. & Advantage \\
\midrule

\multirow{2}{*}{0.00}
& LGNO-8
& \cellcolor{blue!5}0.118 & \cellcolor{blue!5}0.042 & \cellcolor{blue!5}0.106 & \cellcolor{blue!5}0.128 & \cellcolor{blue!5}0.112
& \cellcolor{blue!5}0.101 & \multirow{2}{*}{\(+71\%\)} \\
& MLPConv-8
& 0.454 & 0.069 & 0.268 & 0.557 & 0.382
& 0.346 & \\

\midrule

\multirow{2}{*}{0.01}
& LGNO-8
& \cellcolor{blue!5}0.105 & \cellcolor{blue!5}0.045 & \cellcolor{blue!5}0.094 & \cellcolor{blue!5}0.112 & \cellcolor{blue!5}0.099
& \cellcolor{blue!5}0.091 & \multirow{2}{*}{\(+74\%\)} \\
& MLPConv-8
& 0.453 & 0.073 & 0.266 & 0.557 & 0.381
& 0.346 & \\

\midrule

\multirow{2}{*}{0.02}
& LGNO-8
& \cellcolor{blue!5}0.114 & \cellcolor{blue!5}0.056 & \cellcolor{blue!5}0.106 & \cellcolor{blue!5}0.119 & \cellcolor{blue!5}0.108
& \cellcolor{blue!5}0.100 & \multirow{2}{*}{\(+71\%\)} \\
& MLPConv-8
& 0.449 & 0.078 & 0.265 & 0.553 & 0.377
& 0.345 & \\

\midrule

\multirow{2}{*}{0.03}
& LGNO-8
& \cellcolor{blue!5}0.296 & \cellcolor{blue!5}0.068 & \cellcolor{blue!5}0.190 & \cellcolor{blue!5}0.439 & \cellcolor{blue!5}0.250
& \cellcolor{blue!5}0.249 & \multirow{2}{*}{\(+30\%\)} \\
& MLPConv-8
& 0.460 & 0.089 & 0.280 & 0.562 & 0.388
& 0.356 & \\

\midrule

\multirow{2}{*}{0.04}
& LGNO-8
& \cellcolor{blue!5}0.090 & \cellcolor{blue!5}0.087 & \cellcolor{blue!5}0.093 & \cellcolor{blue!5}0.155 & \cellcolor{blue!5}0.081
& \cellcolor{blue!5}0.101 & \multirow{2}{*}{\(+72\%\)} \\
& MLPConv-8
& 0.457 & 0.101 & 0.279 & 0.560 & 0.386
& 0.356 & \\

\midrule

\multirow{2}{*}{0.05}
& LGNO-8
& \cellcolor{blue!5}0.166 & \cellcolor{blue!5}0.103 & \cellcolor{blue!5}0.158 & \cellcolor{blue!5}0.186 & \cellcolor{blue!5}0.156
& \cellcolor{blue!5}0.154 & \multirow{2}{*}{\(+57\%\)} \\
& MLPConv-8
& 0.458 & 0.118 & 0.285 & 0.560 & 0.388
& 0.361 & \\

\midrule

\multirow{2}{*}{0.06}
& LGNO-8
& \cellcolor{blue!5}0.108 & \cellcolor{blue!5}0.122 & \cellcolor{blue!5}0.139 & \cellcolor{blue!5}0.103 & \cellcolor{blue!5}0.111
& \cellcolor{blue!5}0.117 & \multirow{2}{*}{\(+68\%\)} \\
& MLPConv-8
& 0.456 & 0.131 & 0.289 & 0.558 & 0.386
& 0.364 & \\

\midrule

\multirow{2}{*}{0.07}
& LGNO-8
& \cellcolor{blue!5}0.115 & \cellcolor{blue!5}0.138 & \cellcolor{blue!5}0.158 & \cellcolor{blue!5}0.107 & \cellcolor{blue!5}0.122
& \cellcolor{blue!5}0.128 & \multirow{2}{*}{\(+66\%\)} \\
& MLPConv-8
& 0.461 & 0.146 & 0.301 & 0.561 & 0.393
& 0.372 & \\

\midrule

\multirow{2}{*}{0.08}
& LGNO-8
& \cellcolor{blue!5}0.128 & \cellcolor{blue!5}0.155 & \cellcolor{blue!5}0.167 & \cellcolor{blue!5}0.171 & \cellcolor{blue!5}0.128
& \cellcolor{blue!5}0.150 & \multirow{2}{*}{\(+60\%\)} \\
& MLPConv-8
& 0.462 & 0.162 & 0.309 & 0.561 & 0.395
& 0.378 & \\

\midrule

\multirow{2}{*}{0.09}
& LGNO-8
& \cellcolor{blue!5}0.168 & \cellcolor{blue!5}0.172 & \cellcolor{blue!5}0.208 & \cellcolor{blue!5}0.174 & \cellcolor{blue!5}0.170
& \cellcolor{blue!5}0.178 & \multirow{2}{*}{\(+52\%\)} \\
& MLPConv-8
& 0.451 & 0.178 & 0.309 & 0.552 & 0.386
& 0.375 & \\

\midrule

\multirow{2}{*}{0.10}
& LGNO-8
& \cellcolor{blue!5}0.276 & 0.346 & 0.366 & \cellcolor{blue!5}0.285 & \cellcolor{blue!5}0.310
& \cellcolor{blue!5}0.317 & \multirow{2}{*}{\(+18\%\)} \\
& MLPConv-8
& 0.459 & \cellcolor{blue!5}0.193 & \cellcolor{blue!5}0.322 & 0.557 & 0.395
& 0.385 & \\

\bottomrule
\end{tabular}
\end{adjustbox}
\end{table}

Overall, LGNO-8 achieves a lower mean error than MLPConv-8 for all tested noise amplitudes \(\alpha\in[0,0.10]\). In the low-noise regime, \(\alpha\leq 0.02\), LGNO-8 shows a large average advantage of about \(71\%\)--\(74\%\), indicating that the local gradient-based operator representation remains substantially more accurate when the perturbation is weak. As the noise level increases, the advantage gradually decreases but remains positive. For \(\alpha=0.10\), MLPConv-8 performs better in two individual cases, while LGNO-8 still attains the smaller average error, with an \(18\%\) relative advantage. These results support the observation in Section~\ref{sec:robustness} that LGNO preserves stronger average robustness under moderate input perturbations, although the margin becomes smaller as the noise amplitude increases.

\FloatBarrier
\bibliography{refs}
 
\appendix

\end{document}